\documentclass[11pt,letterpaper]{article}
\usepackage[a4paper,bindingoffset=0.2in,%
left=0.8in,right=0.8in,top=1in,bottom=1in,%
footskip=.25in]{geometry}
\usepackage{graphicx}
\usepackage{booktabs}
\usepackage{url}
\usepackage{cite}
\usepackage[title]{appendix}
\usepackage[utf8]{inputenc} 
\usepackage[T1]{fontenc}    
\usepackage{hyperref}       
\usepackage{url}            
\usepackage{booktabs}       
\usepackage{amsfonts}       
\usepackage{nicefrac}       
\usepackage{microtype}      

\usepackage{algorithm}
\usepackage{algorithmic}
\usepackage{enumitem}
\usepackage{authblk}
\usepackage{breakcites}
\usepackage{amsmath}
\usepackage{bbm}
\usepackage{verbatim}
\usepackage[font=small]{caption}
\usepackage{amsthm}
\usepackage{subcaption}
\usepackage{mathtools}
\usepackage{xcolor}
\usepackage{graphicx}
\usepackage{booktabs}
\usepackage{url}
\usepackage{textcomp}
\usepackage[title]{appendix}
\usepackage[utf8]{inputenc} 
\usepackage[T1]{fontenc}    
\usepackage{hyperref}       
\usepackage{nicefrac}       
\usepackage{microtype}      

\usepackage{algorithm}
\usepackage{etoolbox}
\let\classAND\AND
\let\AND\relax
\usepackage{algorithmic}

\let\AND\classAND
\AtBeginEnvironment{algorithmic}{\let\AND\algoAND}
\usepackage{authblk}
\usepackage{breakcites}
\usepackage{amsfonts,amsmath,amssymb,bm,bbm}
\usepackage{verbatim}
\usepackage{amsthm}
\usepackage{caption}
\usepackage{subcaption}
\usepackage{mathtools}
\usepackage{xcolor}

\newtheorem{theorem}{\bf Theorem}
\newtheorem{assumption}{\bf Assumption}
\newtheorem{corollary}{\bf Corollary}[theorem]
\newtheorem{lemma}[theorem]{\bf Lemma}
\newtheorem{proposition}[theorem]{\bf Proposition}
\newtheorem{definition}{\bf Definition}

\newtheorem{claim}{\bf Claim}

\DeclareMathOperator*{\argmax}{arg\,max} 
\newcommand{\prob}{\mathbb{P}}
\newcommand{\expec}{\mathbb{E}}
\newcommand{\ind}{\mathbbm{1}}
\newcommand{\A}{\mathcal{A}}
\newcommand{\innerprod}[2]{\langle #1, #2\rangle}
\newcommand{\ang}[2]{\text{ang}(#1,#2)}

\newcommand\numberthis{\addtocounter{equation}{1}\tag{\theequation}}

\newcommand{\emx}{e^{-mx}}
\newcommand{\emxx}{e^{(1-m)x}}
\newcommand{\lnf}{\log\left(f_m(x)\right)}

 \newcommand{\ignore}[1]{}

\DeclarePairedDelimiterX\Basics[1](){ #1}

\usepackage{centernot}

\newcounter{const-no}

\allowdisplaybreaks

\title{Bandits with Mean Bounds}

\author[1]{Nihal Sharma\thanks{E-mail: \texttt{nihal.sharma@utexas.edu}.}
}
\author[2]{Soumya Basu}
\author[3]{Karthikeyan Shanmugam}
\author[1]{Sanjay Shakkottai}
\affil[1]{The University of Texas at Austin, Austin, USA}
\affil[2]{Google, Mountain View, USA}
\affil[3]{Google DeepMind, Bengaluru, India}

\begin{document}

\maketitle

\begin{abstract}%
    We study a variant of the bandit problem where side information in the form of bounds on the mean of each arm is provided. We prove that these translate to tighter estimates of subgaussian factors and develop novel algorithms that exploit these estimates. In the linear setting, we present the Restricted-set OFUL (R-OFUL) algorithm that additionally uses the geometric properties of the problem to (potentially) restrict the set of arms being played and reduce exploration rates for suboptimal arms. In the stochastic case, we propose the non-optimistic Global Under-Explore (GLUE) algorithm which employs the inferred subgaussian estimates to adapt the rate of exploration for the arms. We analyze the regret of R-OFUL and GLUE, showing that our regret upper bounds are never worse than that of the standard OFUL and UCB algorithms respectively. Further, we also consider a practically motivated setting of learning from confounded logs where mean bounds appear naturally.
\end{abstract}
\section{Introduction}

We study the problem of bandits with mean bounds and bounded rewards where an agent is presented with a set of $K$ arms, along with side-information in the form of upper and lower bounds on the average reward for each of the arms. The agent is then asked to successively choose arms based on previous observations in order to maximize the cumulative reward. As is standard in MABs, the agent's performance is compared to that of a genie that always chooses the arm that gives the largest reward in expectation. We consider this in the commonly studied frameworks of stochastic Multi-Armed Bandits (MABs) and the Linear Bandit. In the former, rewards of each arm are drawn independently from its associated distribution and are uncorrelated with rewards of other arms. The linear setting couples the rewards from all arms through an unknown, but fixed latent vector that is used to paramterize the mean rewards. We seek to design arm selection policies that efficiently utilize the provided mean bounds in offering both improve regret performance and computational complexity.

Our setting is motivated by the problem of inferring efficacy of interventions from confounded logs. As a concrete example, consider the following healthcare example: Interavenous tissue plasminogen (tPA) activators are known to be highly effective in treating acute ischemic strokes if administered within 3 hours of the onset of symptoms. Otherwise, a less-effective medical therapy is recommended, as tPA causes higher chances of adverse side effects and hemorrhages~\cite{powers2019guidelines}. If a log is then generated without recording the time since the onset of symptoms, it would present tPA to be better than the alternative. Naively using such a log to infer tPA to be the best intervention could lead to unfavourable outcomes in areas with poor access to healthcare, where patients take longer to reach the hospital after symptoms appear. Inferring the optimal decision in the presence partial contextual information is non-trivial in general, however, one can extract bounds on the mean of the treatments using these logs. 

The offline problem of extracting bounds on mean effects of interventions is well-studied by works such as \cite{robins2000sensitivity, brumback2004sensitivity, richardson2014nonparametric, zhang2017transfer, yadlowsky2022bounds}. We are interested in using these bounds to aid the online learning of optimal actions, also studied in \cite{zhang2017transfer, combes2017minimal} for the Bernoulli MABs (the latter also treats gaussian rewards). In this work, we show that using the mean bounds in non-trivial ways can lead to improvements in regret performance over existing methods in the case of linear bandits and stochastic MABs with this side information.

The key intuition for our improvements comes from the notion of subgaussian factors of bounded random variables. A random variable taking values in $[0,1]$ is known to be $\nicefrac{1}{2}-$subgaussian, which is tight for a Bernoulli random variable with mean $\frac{1}{2}$. Given additional information about the location of its mean (in an interval, say), tighter estimates of the variance can be inferred. However, whether such information provides a sharper subgaussian factor, which in itself is an upper bound to the variance, is not known. The worst-case factor above is commonly used in the bandit setting, to characterize the concentration behavior of (bounded) random variables around their mean. Further, the jump from estimation to decision-making means that estimates can be incorrect (e.g., biased, poor confidence) so long as it does not alter the decision. In a bandit setting, this observation has been classically used to explore sub-optimal actions at a lower rate than that for the best action; this leads to a worse accuracy bound for the reward estimates of sub-optimal actions but does not affect decision-making. In this paper, we go beyond this intuition: we show that, surprisingly, known bounds on mean rewards for some actions (side information) enable us to explore other {\em possibly unrelated actions} at lower-rates, thus improving overall cumulative regret. Our contributions are detailed below: 

\textbf{Contribution~1: Improved Subgaussian Factors with Mean Bounds:} We provide a characterization of the subgaussian factor $\sigma \leq\frac{1}{2}$ of any random variable bounded in $[0,1]$ when upper and lower bounds on its mean are known. Specifically, in Theorem \ref{thm: improved subgaussian factors} and Corollary \ref{cor: sg factors with bounds}, we show that when the mean is known to be towards either half of this interval, one can infer factors that are strictly less than 0.5. These immediately imply tighter concentration bounds for bounded random variables. This result could be of independent interest, however, we study the effects of such information in bandit learning.

\textbf{Contribution~2: Linear Bandits:} We present the Restricted-set Optimism in the Face of Uncertainty for Linear bandits (R-OFUL) algorithm for bounded rewards. R-OFUL first uses the structure imposed by the linear rewards to refine the given side information and produces the tightest possible mean bounds for each arm in the action set before online interaction. Then, at each time, it leverages the geometry of the problem to restrict the set of arms to be considered. Finally, it reduces exploration by using the sharp estimates of subgaussian factors above for arms in the restricted set and chooses actions much like the standard OFUL algorithm of \cite{abbasi2011improved}.

We show that the lower bounds are key to our improvements in the online phase of the algorithm. First, the restricted set is constructed as a cone around the arm with the {\em largest lower bound}. Combining the lower bounds of the arms that remain with the boundedness of rewards then leads to our reduced exploration rates. Our analysis in Theorem \ref{thm: linear bandit result} shows that using side information, R-OFUL can improve over the regret guarantees of standard OFUL by a constant factor. It also improves the computational cost due to the restriction of arms. To the best of our knowledge, this is the first investigation on Linear Bandits with Mean bounds.

\textbf{Contribution~3: Stochastic MABs:} We develop GLobal Under-Explore (GLUE)---an index based policy which, unlike the UCB \cite{auer2002finite} and kl-UCB \cite{garivier2011kl} algorithms, is not optimistic; i.e. the indices do not serve as high-probability upper bounds to true means of each arm. We use the fact that violating the upper bound property for the indices of suboptimal arms does not adversely affect the regret as long as the property is maintained for the (unknown) best arm. In particular, our indices for an arm are formed using a quantity that can be strictly lesser than the true subgaussian factor of the arm {\em only if} the arm is sub-optimal. This causes the sub-optimal arms to be under-explored which leads to improvement in regret performance.
    
This problem is a specific form of the Structured Bandit framework of \cite{combes2017minimal} where the authors develop OSSB, a non-optimistic algorithm to balance exploration across arms in structured spaces. In the case of Bernoulli rewards, OSSB reduces to the B-kl-UCB algorithm of \cite{zhang2017transfer} that uses the kl-UCB index for each arm truncated at the corresponding {\em upper bound}. In both B-kl-UCB and OSSB, the lower bounds on arm means are only used initially to prune away arms that can be identified as suboptimal, and the upper bounds are used to clip the arm indices at each time. 

For general bounded rewards, the upper bound on s.g-factor of the optimal arm can be inferred from the {\em highest lower bound} of arm means. Therefore, we define the exploration rates of the arms to be the minimum of the individual arm subgaussian factors and the aforementioned upper bound. Our analysis in Theorem \ref{thm: GLUE regret} shows that in instances with non-informative mean bounds, we recover the performance of UCB. However, using our adapted rates in instances with rich side information leads to GLUE significant improvements over vanilla UCB. Empirically, we see that our performance is comparable to B-KL-UCB when it is known that one of the arms has a mean close to $1$.

\noindent\textbf{Contribution~4: Mean bounds from confounded logs:} We develop techniques to extract upper and lower bounds on the means of arm rewards from partially confounded logged data. Specifically, we consider a dataset that has been collected by an oracle that observes the full context, takes the optimal action and receives the corresponding rewards. However, the log only contains some parts of the context along with the corresponding (action, reward) pair, generalizing the work of \cite{zhang2017transfer}, where none of the contexts are recorded. In the stochastic case, using bounds on the gap between the means of the best and second best arms {\em as observed by the oracle}, as well as the corresponding gap between best and worst arms, we derive upper and lower bounds on the mean rewards of arms to be used by an agent that acts only based {\em only} on the recorded parts of the context. We show that these are tight, i.e. there are instances that meet both the upper and lower bounds. We also show that these bounds can be inferred in a linear setting without the knowledge of gaps. We validate our work through synthetic and semi-synthetic experiments with the Movielens 1M dataset \cite{harper2015movielens}.
\subsection{Related Work}\label{sec: related work}
Bandit problems have seen a lot of interest over the past few decades (see \cite{bubeck2012regret,lattimore2018bandit} for comprehensive surveys). A vein of generalization for the same has seen numerous advances in incorporating several forms of side information to induce further structure into this setting. Notable among them are graph-structures information \cite{buccapatnam2014stochastic,valko2014spectral,amin2012graphical}, latent models \cite{li2010contextual,bareinboim2015bandits,lattimore2016causal,sen2016contextual}, expert models \cite{auer1995gambling,mannor2011bandits}, smoothness of the search space \cite{kleinberg2008multi,srinivas2009gaussian,bubeck2011x}, among several others. We assume side information in the form of mean bounds and study how such information affects the decision making process. 
    
In the stochastic setting, our bandit problem has connections to the works by \cite{zhang2017transfer} and \cite{combes2017minimal}. An in-depth comparison with these can be found in Section \ref{sec: comparison}. Along another thread, \cite{bubeck2013bounded} provide algorithms with bounded regret if the mean of the best arm and a lower bound on the minimum suboptimality gap is known. These techniques, however, do not apply in our setting as the side information we consider does not allow us to extract such quantities in general. In the linear setting, with time-varying action sets, the works of \cite{li2010contextual,abbasi2011improved}  are inspired by the upper confidence bound-type arguments of \cite{auer2002finite} for the stochastic case. When action sets remain fixed over time, arm-elimination type algorithms like ones in \cite{valko2014spectral} improve dependence on the dimension of the arms. We study the novel setting of linear bandits with mean bounds and varying action sets in this work.

The extraction of mean bounds from confounded logs has been studied in the context of estimating treatment effects in the presence of confounders. Here, actions are treatments, and the rewards capture the effects of this choice. A line of existing work performs sensitivity analysis by varying a model on the latents, measured variables, treatments and outcomes in a way that is consistent with the observed data \cite{robins2000sensitivity,brumback2004sensitivity,richardson2014nonparametric}. Recently in \cite{yadlowsky2022bounds,zhao2019sensitivity}, a universal bound on the ratio of selection bias due to the unobserved confounder is assumed. This means that the treatment choice has a bounded sensitivity on the unknown context (i.e., mostly irrelevant). We deal with the other extreme, where we assume that the outcomes in the log are recorded using an unknown {\em optimal policy} (under complete information), and that the knowledge of worst case sub-optimality gaps for the given latent context space is known. Our assumption allows for strong dependence on the unknown context.

The use of logged data to improve online learning has been studied recently by \cite{zhang2017transfer,zhang2019warm,ye2020combining}. The first assumes that the log contains no information of the variables that affect reward generation, while the others assumes that all such variables are present. We consider the middle ground, by assuming that a fraction of these variables are included in the logs. the authors of \cite{tennenholtz2020bandits} studied a related linear bandit problem where the agent is provided with partial observations collected offline according to a fixed behavioral policy which can be sampled from. These are then used to aid online decision making after the agent observes the full context. In contrast, we consider the case where the agent can only observe the {\em partial context} at each time and is provided with confounded logs collected from a policy (to which we do not assume sampling access) that is optimal under the full context.
\section{Bandits with Mean Bounds}

We consider the round-based interaction of a learning agent with a stochastic environment through a set of $K_0$ actions (or arms). At each round, the agent chooses one of the provided arms and observes a stochastic reward. To aid its decision making, the agent is also provided with side information in the form of bounds on the mean reward of each arm. The goal of the agent is to choose arms such that the cumulative reward is competitive with respect that obtained by a genie that only chooses the `best' arm in each round. 

In this work, we concentrate on two commonly studied formulations of this problem: Stochastic Multi-armed Bandits and Linear Bandits. We detail the notations, structure of rewards, and notions of the best arm for each of these below.

\noindent\textit{\underline{Stochastic Multi-armed Bandit:}} The agent is provided with $K_0$ arms indexed by the set $[K_0] = \{1, 2, ... K_0\}$, with each arm being associated with a fixed and unknown reward distribution supported over the interval $[0,1]$. The mean reward of arm $k\in[K_0]$ is denoted by $\mu_k$. For each arm $k\in[K_0]$, the side information is given by a tuple $(l_k(t), u_k(t))$ such that $\mu_k\in [l_k(t), u_k(t)]$ and $l_k(t),u_k(t)\in[0,1]$. These tuples thus specify upper and lower bounds on the {\em mean reward} for each of the arms and can be different for each arm. With this knowledge, at round $t$, the agent chooses an arm $A_t$ and observes a reward $Y_t$ sampled from the distribution associated with the chosen arm. We define $k^* = \argmax_{k\in[K_0]}\mu_k$ be the (unique) best arm with $\mu_{k^*} = \mu^*$ and the genie chooses this arm at each round. 

The agent thus aims to minimize its average cumulative regret, which at round $T$ is given by $R_T = \sum_{t=1}^T \mu^* - \expec[Y_t]$. The expectation here is over the randomness of the rewards and the choice of arms of the agent. If we have that for all $k\in[K_0]$, $l_k = 0, u_k=1$, then our setting matches that of the standard Multi-armed Bandit with bounded rewards.

\noindent\underline{\em Linear Bandit:} In this case, in each round $t$, the agent is provided a (possibly different) set of action $\mathcal{A}_t\subseteq\A$ sampled from a (possibly infinite) set of actions $\A$ such that $\|\mathcal{A}_t\|=K_0$ . Each arm $a\in\mathcal{A}$ is a vector in $\mathbb{R}^d$. At round $t$, the agent chooses an arm $A_t\in\mathcal{A}_t$ and observes a reward $Y_t = \innerprod{A_t}{\theta^*} + \eta_t$ where $\theta^*\in\mathbb{R}^d$ is a fixed unknown vector. The noise $\eta_t$ is a conditionally $\sigma(A_t)-$subgaussian random variable with respect to the filtration $\mathcal{F}_t = \sigma\left(A_1, Y_1, ..., A_{t-1}, Y_{t-1}, A_t\}\right)$. This arm-specific subguassian factor is not revealed to the agent. As in the case above, for all rounds $t$, $Y_t\in[0,1]$ and  the agent is provided with tuples $(l_a, u_a)$ for each $a\in\mathcal{A}$ such that $\theta^Ta\in[l_a, u_a]$ and $l_a, u_a\in[0,1]$. We also assume that $\|\theta^*\|\in [m,M]$ and that $\|a\|=1$ ($\|\cdot\|$ is the Euclidean norm) for $a\in\mathcal{A}$.

As before, the agent aims to maximize its cumulative reward in order to remain competitive with a genie that chooses the arm with highest mean reward at each round. Equivalently, the agent aims to minimize the regret, which at round $T$ is given by $R_T = \sum_{t=1}^T \max_{a\in\mathcal{A}_t}\innerprod{a-A_t}{\theta^*}$. In contrast to the setting above, $R_T$ is a random variable due to the randomness in $A_t$. With $l_a=0, u_a=1$ for $a\in\mathcal{A}$, our setting becomes that of the standard Linear Bandit with bounded rewards.

For both the settings above, we note three things: {\em a)} The agent is allowed to use the all historical information and accumulated rewards to inform future arm choices, {\em b)} The provided bounds {\em do not} restrict the range of observed rewards, but only their mean, {\em c)} Our discussion can be easily generalized to bounded rewards supported over any interval $[a,b]$ by appropriate shifting and scaling. Our methods will also apply when bounds were known on the norm of the actions instead of the strict equality in the linear bandit case.

In the standard versions of both the above settings, Optimism-based policies have been well-studied. In the stochastic case, the standard UCB algorithm in \cite{auer2002finite} and the KL-UCB algorithm in \cite{garivier2011kl} achieve provably optimal performance for specific families of reward distributions. However, with non-trivial mean bounds, it is shown in \cite{zhang2017transfer} that one can outperform these methods by leveraging the information that is provided by these bounds. In the linear case, the OFUL algorithm of \cite{abbasi2011improved} (or LinUCB of \cite{li2010contextual}) provides optimal regret guarantees up to logarithmic factors (see Chapter 24 in \cite{lattimore2018bandit}). To the best of our knowledge, the linear bandit problem with mean bounds has not been considered before. These naive algorithms that do not use the knowledge of the provided side information will serve as baselines to the methods that we develop.

Before we propose our methods, we will first spend time developing improved concentration bounds for empirical means of random variables given mean bounds. These concentrations will be crucial in analyzing our arm selection policies for both the settings above.

\section{Improved Concentrations with Mean Bounds}\label{sec: improved subgaussian factors}

A random variable $X$ is said to be $\sigma-$subgaussian if and only if $\expec\left[ e^{(s(X-\expec[X])}\right]\leq \exp\left(\frac{s^2\sigma^2}{2}\right)$. As a consequence, we have the following Chernoff-Hoeffding concentration inequality for $\sigma-$subgaussian variables:
\begin{align*}
    \prob(X\geq t)\leq \exp\left(-\frac{t^2}{2\sigma^2}\right).
\end{align*}
Further, it is well known that any random variable that is bounded in an interval $[a,b]$ is $\frac{b-a}{2}-$subguassian. With no additional information about this variable, we can not improve this factor. Suppose the mean of the random variable were known. We ask the question `Does this additional information lead to a tighter ($<\frac{1}{2}$) estimate of the subgaussian factor?'. This is important because a tighter subgaussian parameter would lead to faster concentrations of the random variable.

Suppose now that the random variable $X\in[0,1]$ has mean $m$ that is known. Since it is bounded, the variance of this random variable is bounded by that of a Bernoulli random variable with the same mean. Further, the square of the subgaussian factor for any random variable is an upper bound on its variance (See Lemma 5.4 in \cite{lattimore2018bandit}). Therefore, we have that $var(X)\leq m(1-m) \leq sg(\text{Bernoulli}(m))^2\leq \frac{1}{2}$ where $var(\cdot)$ is the variance and $sg(\cdot)$ is the subgaussian factor of the argument. We will now show that there indeed exist $\sigma\in (m(1-m), \frac{1}{2})$ such that any random variable $X\in[0,1]$ with mean $m\in(0,1)$ is $\sigma-$subgaussian (the cases when $m=0,1$ are trivial).

For this to hold, we require that 
\begin{align*}\label{eq: tight sg}
    \expec\left[e^{s(X - m)}\right] &\leq  m\exp\left(s(1-m)\right) + (1-m)\exp\left( -ms\right) \leq \exp\left( \frac{s^2\sigma^2}{2}\right)\\
		\implies \sigma^2 &\geq \frac{2}{s^2} \log\left( (1-m)e^{-ms} + \mu e^{s(1-m)}\right) \geq \max_{s\in\mathbb{R}} \frac{2}{s^2} \log\left( m e^{s(1-m)} + (1-m) e^{-ms} \right) \numberthis
\end{align*}

Thus, it is sufficient to prove that there exist a unique maximizer to the RHS of Equation \ref{eq: tight sg} and that the achieved maxima is $<\frac{1}{2}$. Below, we give the steps involved in proving this. The full proof is deferred to Appendix \ref{apx: improved sg bounds}.

We will use a sequence of lemmas to establish our result. For simplicity, we define $f_m(x) = m e^{x(1-m)} + (1-m) e^{-mx}$ and begin by proving the following facts:
\begin{lemma}\label{lem: f>1}
For all $m\in(0,1)$ and $x\in\mathbb{R}$ , $f_m(x)>1$. Further, $f_m(x) = f_{1-m}(-x)$ and $\lim_{x\rightarrow 0} \frac{2}{x^2}\lnf = m(1-m)$.
\end{lemma}
Given these properties of $f_m(x)$, we then prove the following Lemmas:
\begin{lemma}\label{lem: composite lemma}
The following are true:
\begin{itemize}
\item[1.] When $m\in(0.5,1)$, the function $\frac{2}{x^2}\lnf$ is not maximized at any $x>0$.
\item[2.] For all $m\in (0,0.5), x>0$, we have that with $x_1$ as defined in Lemma \ref{lem: AB properties}
\begin{itemize}
    \item[a)] For all $x\in(0,x_1)$, $\frac{2}{x^2}\lnf > m(1-m)$,
    \item[b)] For all $x> \frac{2}{m}$, $\frac{2}{x^2}\lnf < m(1-m)$.
\end{itemize}
\end{itemize}
\end{lemma}

Combining these two Lemmas, we have our final result:
	
\begin{theorem}\label{thm: improved subgaussian factors}
    Let $X\in[0,1]$ be a random variable with mean $m\in(0,1)$, $m\neq 0.5$. Then, $X$ is $\sigma$-subgaussian for $\sigma^2 = \max_{x\in\mathbb{R}}\frac{2}{x^2}\log \left( m\emxx + (1-m)\emx\right)$. Additionally, $\sqrt{m(1-m)}< \sigma<\frac{1}{2}$.
\end{theorem}

\begin{figure}[ht!]
    \centering
    \includegraphics[width=0.45\textwidth]{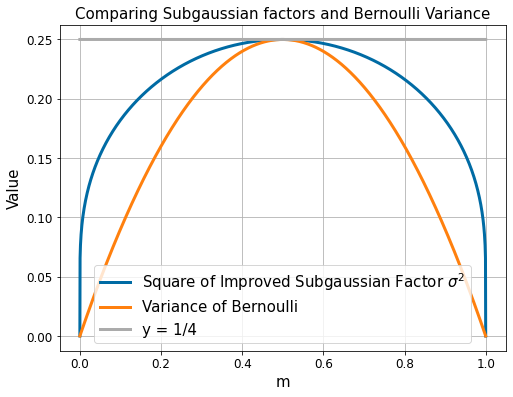}
    \caption{Improved Subgaussian factor vs. Bernoulli variance}
    \label{fig: sg vs var}
\end{figure}

The lower bound of $\sqrt{m(1-m)}$ on $\sigma$ implies that it increases in $(0,0.5)$ and decreases in $(0.5,1)$. In Figure \ref{fig: sg vs var}, we plot the values of $\sigma^2$ and $m(1-m)$, the variance of a Bernoulli with mean $m$, for different values of $m$ in $(0,1)$ to verify this trend. This leads to the following corollary:

\begin{corollary}[Improved Subgaussian factors with mean bounds] \label{cor: sg factors with bounds}
    Let $X$ be a random variable over $[0,1]$ such that $\expec[X]\in [l,u]$ for some $l,u\in [0,1]$. For any $m\in (0,1)$, let $c(m) = \max_{s\in\mathbb{R}} \frac{2}{s^2} \log(f_m(s))$ with $f_m(x) = m e^{x(1-m)} + (1-m) e^{-mx}$. Then, the following are true:
    \begin{itemize}
        \item[1.] If $l>0.5$, we have that $X$ is $\sqrt{c(l)}$-subgaussian.
        \item[2.] If $u<0.5$, we have that $X$ is $\sqrt{c(u)}$-subgaussian.
    \end{itemize}
\end{corollary}
Thus, bounds on the mean of the random variable that do not contain the worst-case value of $0.5$ always lead to improved estimates of its subgaussian factor. We call such mean bounds to be `{\em non-trivial}'.

\section{Restricted-set Optimism under Uncertainty for Linear Bandits with Mean Bounds}\label{sec: Linear bandits}

Now that we know the implications of non-trivial mean bounds on the concentration behavior of a random variable, we move on to studying the effects of such information on the regret of a linear bandit agent. Recall that at each round $t$, the agent is given an action set $\mathcal{A}_t$ of $K_0$ vectors in $\mathbb{R}^d$ sampled from a set $\A$ and must learn to choose actions $A_t\in\mathcal{A}_t$. Observing $Y_t = \innerprod{A_t}{\theta^*} + \eta_t$, where $\theta^*$ is an unknown vector and $\eta_t$ a conditionally $\sigma(A_t)$-subgaussian random variable with respect to the filtration generated by all observations up to $t-1$ and the choice of arm $A_t$, the agent competes with a genie that always picks the arm in the provided set with largest mean reward. That is, the agent aims to minimize its regret at round $T$ given by $R_T = \sum_{t=1}^T \max_{a\in\mathcal{A}_t} \innerprod{a-A_t}{\theta^*}$. As is standard in Linear Bandits, this regret $R_T$ is random due to the randomness in the choice of $A_T$ and thus, we seek to minimize this regret with high probability. Specifically, we use $\delta$ to be the user-specified failure probability and develop a policy that minimizes regret with probability at least $1-\delta$.

The agent is also provided with tuples $(l_a, u_a)$ for each arm $a\in\mathcal{A}$ which can be used to aid its decision making. Further, it is known that $\|\theta^*\|\in[m,M]$ and that for all $a\in\mathcal{A}$, $\|a\|=1$. We propose the Restricted-set Optimism under Uncertainty for Linear Bandits (R-OFUL) algorithm (summarized in Algorithm \ref{alg: R-OFUL}) that uses these mean bounds and the linear structure of rewards to minimize regret. At each round, the agent performs three steps: {\em a)} Tightening the mean bounds, {\em b)} Restricting the instantaneous action set, and {\em c)} Invoking the update similar to the OFUL algorithm of \cite{abbasi2011improved} with potentially improved subgaussian factors that leads to reduced exploration. We describe each of these steps below.

\begin{algorithm}[t]
    \begin{algorithmic}[1]
        \STATE{{\bf Inputs:} Action set $\A$, bound tuples $(l_a,u_a)$ for each $a\in\A$}
        \STATE{\underline{\em Tightening bounds:} Update the tuples $(l_a,u_a)$ as in Equation \ref{eq: tighter bounds}.}
        \FOR{t = 1,2,3,...}
        \STATE{Receive set $\A_t$, identify $\hat{a}_t$ and $l_{max}(t)$.}
            \STATE{\underline{\em Restricting instantaneous action sets:}}
        \STATE{Prune out all arms $a$ with $u_a<l_{max}(t)$.}
        \STATE{Form set $\A_{r}(t)$ defined in Equation \ref{eq: restricted set defn}}.
        \STATE{\underline{\em Reduced exploration:}}
        \STATE{Compute $\sigma_t$ as in Equation \ref{eq: sigmat defn}.}
        \STATE{With $\mathcal{E}_t$ defined in Equation \ref{eq: ellipsoid defn}, choose $a_t$ to be $\argmax_{a\in\A_r(t), \theta\in\mathcal{E}_t} \innerprod{a}{\theta}$, observe reward $Y_t$.}
        \ENDFOR
    \end{algorithmic}
    \caption{{\bf R}restricted-set {\bf O}ptimism in the {\bf F}ace of {\bf U}ncertainty for {\bf L}inear bandits with mean bounds}
    \label{alg: R-OFUL}
\end{algorithm}	
	
\noindent\textbf{Tightening the Mean Bounds:} Since all rewards are obtained using the same parameter $\theta^*$, bounds on the mean of some action give us non-trivial information about the mean reward of all other actions. This fact is used to tighten the provided upper and lower bounds on arm means. Formally, define $\mathcal{C}_b = \{ \theta: \|\theta\|\in[m,M], \forall a\in\A, \innerprod{a}{\theta} \in [l_a,u_a]\}$ be the set of feasible parameter vectors. Then, the tight bounds $l_a,u_a$ (we abuse notation by representing both the provided and tighter bounds by the same variables) are computed for each arm $a\in\A$ as 
\begin{align}\label{eq: tighter bounds}
    l_a \leftarrow \min_{\theta\in\mathcal{C}_b}\innerprod{a}{\theta}, ~~u_a \leftarrow \max_{\theta\in\mathcal{C}_b}\innerprod{a}{\theta}.
\end{align}
After these tight bounds are computed, the set of feasible vectors $\mathcal{C}_b$ is recomputed with these updated versions of the mean bounds. 

 \noindent\textbf{Restricting instantaneous action sets:} This phase is carried out after receiving the set $\A_t$ at time $t$ and chooses a subset of arms to be considered at each round. First the agent prunes away deterministically suboptimal arms used $l_{max}(t)=\max_{a\in\A_t} l_a$. This restricts the set of arms in consideration to $\A_{p}(t) = \{ a: u_a \geq l_{max}(t)\}$.  

Now we denote by $\hat{a}_t = \argmax_{a\in\A_t}l_a$ the `most promising arm' at time $t$ and use $\ang{a}{a'} = \cos^{-1}\left( \nicefrac{\innerprod{a}{a'}}{\|a\|\|a'\|}\right)$ as the angle between the vectors $a$ and $a'$. With $\alpha_t = \cos^{-1}\left(\nicefrac{l_{max}(t)}{M}\right)$,  the restricted set of actions at time $t$ is given by
\begin{align}\label{eq: restricted set defn}
\A_r(t) = \{ a\in\A_p(t): \ang{a}{\hat{a}_t}\leq 2\alpha_t\}
\end{align}
\noindent\textbf{Reduced Exploration:} In this phase, the agent computes an upper bound on the s.g-factors of arms in the set $\A_r(t)$ and uses this to reduce exploration in its arm selection strategy. Recall that for any $y\in(0,1)$ we use $c(y) = \max_{s\in\mathbb{R}} \frac{2}{s^2}\log(f_y(s))$ where $f_y(x) = ye^{(1-y)x} + (1-y)e^{-yx}$. For each arm $a\in\A_t$ with (tightened) mean bounds $l_a, u_a$, we define
\begin{align*}
    \psi^2(l_a,u_a) = \begin{cases}
        c(l_a) &\text{if }l_a>0.5\\
        c(u_a) &\text{if }u_a<0.5\\
        0.25 &\text{otherwise}
    \end{cases}
\end{align*}

to be the function that maps the upper and lower bounds to the reduced s.g-factor as a result of Corollary \ref{cor: sg factors with bounds}. Then, upper bound on the s.g-factor of any arm in $\A_r(t)$ is given by
\begin{align}\label{eq: sigmat defn}
\sigma_t = \min\left\{\psi(m\cos(3\alpha_t),b), \max_{a\in\A_{r}(t)}\psi(l_a,u_a)\right\}.    
\end{align}

The agent forms the ridge regression estimate of the parameter $\theta^*$ at time $t$ as $\hat\theta_t = \overline{V}_t^{-1}\left( \sum_{n=1}^t a_n Y_n\right)$ with $\overline{V}_t = \sum_{n=1}^t a_n a_n^T + \lambda I_d$. Here $I_d$ is the $d$-dimensional identity matrix and $\lambda>0$ is a regularization parameter. It then defines the following set around $\theta^*$:
\begin{align*}\label{eq: ellipsoid defn}
    \mathcal{C}_t &= \left\{ \theta\in\mathcal{C}_b: \left\|\hat\theta_t - \theta\right\|_{\overline{V}_{t-1}} \leq \beta_{t-1}(\delta)\right\},~~~~\sqrt{\beta_t(\delta)} = \sqrt{\lambda}M + \gamma\sqrt{2\log\left(\frac{1}{\delta}\right) + d\log\left( 1 + \frac{t}{d\lambda}\right)}. \numberthis
\end{align*}

Here $\gamma = \max_{n\leq t} \sigma_n$ and $\delta$ is the user-specified failure probability. The choice of arm $A_t$ is given by 
\begin{align}\label{eq: R-OFUL update rule}
    A_t = \argmax\limits_{a\in\A_r(t), \theta\in\mathcal{C}_t} \innerprod{a}{\theta}.
\end{align}

This selection rule is similar to that of vanilla OFUL in \cite{abbasi2011improved}. The difference here, we restrict {\em a)} the set $\mathcal{C}_t$ with the feasible set $\mathcal{C}_b$, {\em b)} the set of arms that can be played at time $t$ with $\A_r(t)$ and {\em c)} the confidence width $\beta_t(\delta)$ using the tighter subgaussian estimate of $\gamma$.

\subsection{Key Ideas}
\noindent\textbf{1. Deriving the restricted set:} The boundedness and linearity of rewards implies that if there exists an arm that promises a high mean (in our case, $\hat{a}_t$), its angle with the parameter $\theta^*$ is upper bounded. This is captured by the quantity $\alpha_t$. Further, it also implies that the angle between the best arm in the set $\A_r(t)$ and $\theta^*$ can be no more than $\alpha_t$. Combining these, we get that the best arm lies in a cone of angle $2\alpha_t$ around $\hat{a}_t$.\\
\noindent\textbf{2. Tight s.g-factors:} Since the goal is to minimize regret, the worst-case arm that can be played (one with the lowest mean reward) in $\A_r(t)$ is one that is at an angle $3\alpha_t$ away from $\theta^*$. As rewards are bounded, this lower bound on the mean reward of the worst arm can be translated into an upper bound on the s.g-factor of arms in $\A_r(t)$ using the function $\psi$. If the side information of arms in $\A_r(t)$ can provide sharper estimates on this upper bound, we use those instead. This is reflected in our definition of $\sigma_t$ in Equation \ref{eq: sigmat defn}
\subsection{Regret of R-OFUL}
We now present our high-probability regret guarantees for Algorithm \ref{alg: R-OFUL}.

\begin{theorem}\label{thm: linear bandit result}
    With probability at least $1-\delta$, the regret of R-OFUL suffers a worst-case regret of
    \begin{align*}
        R_t \leq \sqrt{8dt\beta_{t-1}(\delta)\log\left(1+\frac{t}{d\lambda}\right)}.
    \end{align*}
\end{theorem}

\begin{proof}[Proof Sketch]
First, we establish the following lemma:
\begin{lemma}
    Let $a^*_t = \argmax_{a\in\A_t} \innerprod{a}{\theta^*}$ be the best arm at time $t$. Then, $a^*_t\in\A_r(t)$. Further, for any arm $a\in\A_t$, the s.g-factor $\sigma(a)$ satisfies $\sigma_a\leq \sigma_t$.
\end{lemma}
Using this lemma, we generalize the concentration arguments of \cite{abbasi2011improved} to prove that with probability at least $1-\delta$, the parameter vector satisfies $\theta^*\in\mathcal{C}_b$. The result then follows using the standard arguments for bounding regret in linear bandits.
\end{proof} 

The full proof can be found in Appendix \ref{apx: linear bandits}.

\begin{figure*}[t]
    \centering
            \begin{subfigure}[b]{0.33\textwidth}
                \includegraphics[width = \textwidth]{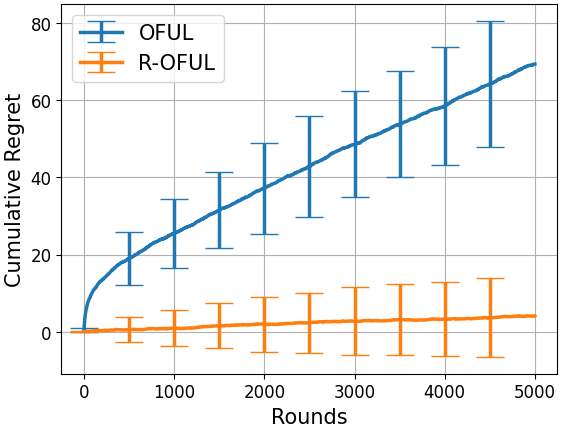}
                \caption{5 arms per round}
                \label{fig: 5 arms roful}
            \end{subfigure}%
            \begin{subfigure}[b]{0.33\textwidth}
                \includegraphics[width = \textwidth]{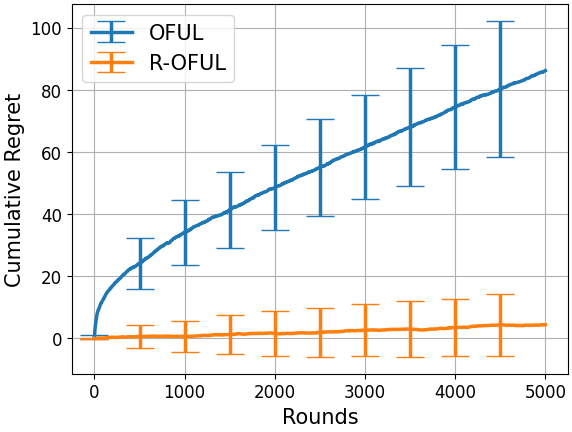}
                \caption{10 arms per round}
                \label{fig: 10 arms roful}
            \end{subfigure}%
            \begin{subfigure}[b]{0.33\textwidth}
                \includegraphics[width = \textwidth]{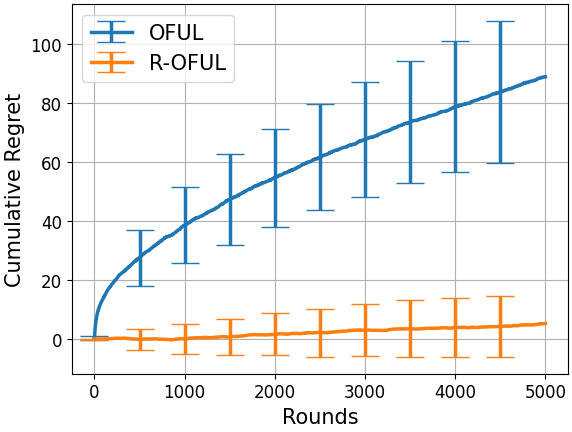}
                \caption{15 Arms per round}
                \label{fig1: 15 arms roful}
            \end{subfigure}
    \caption{\small Comparing R-OFUL (Algorithm \ref{alg: R-OFUL}) with vanilla OFUL with arms in $\mathbb{R}^{10}$ and bounded rewards. Results are averaged over 200 runs and error bars for one standard deviation are displayed. R-OFUL restricts arms and only chooses between 2-3 arms per round on average and thus, its average regret is comparable over all three figures, while OFUL suffers regret that grows with the number of arms per round. We also observe that R-OFUL computes arm updates $6-6.7\times$ faster on average.}
    \label{fig: R-OFUL synthetic}
\end{figure*}

\subsection{Comparisons and Discussions}
\noindent\textbf{R-OFUL vs OFUL:} The regret of the OFUL algorithm is of the order $\tilde{\mathcal{O}}\left(\sqrt{dt\sigma_{max}(d + \log(1/\delta))}\right)$ with $\sigma_{max} = \nicefrac{(b-a)}{2}$ as the universal upper bound on the s.g-factor of the arms in the set $\A$. If no informative bounds are present (the worst case) our R-OFUL algorithm matches the performance of OFUL exactly. However, richer side information can lead to a constant factor improvement in the finite-time regret guarantees as $\gamma \leq \sigma_{max}$. Further, in cases where $\|\A_r(t)\| = 1$, the instance-dependent regret of R-OFUL is $0$, while any linear bandit algorithm that does not perform this arm restriction will suffer non-zero regret on average.

We also improve on the computation complexity of OFUL. Specifically, to compute $A_t$ (Equation \ref{eq: R-OFUL update rule}), the usual practice is to solve the optimization problem for all $a$ individually for all arms and then choose one with the maximum objective value. If the mean bounds are such that $|\A_r(t)|<|\A_t|$, that is, if the set of arms is restricted by R-OFUL, then, so is the number of optimization sub-problems to solve, which speeds up computation. 

\noindent\textbf{Optimality:} We study a generalization of OFUL which is known to be optimal up to logarithmic factors in the worst case. However, it is known that optimality in general linear bandit problems is achieved by complex non-optimistic policies. The study of regret lower bounds and policies that match them are left open. 

\noindent\textbf{Empirical Evaluations:} We compare the regret performance of R-OFUL with the vanilla OFUL algorithm empirically in Figure \ref{fig: R-OFUL synthetic}. For this, we sample a random set of 100 arms in $\mathbb{R}^{10}$ and sample ${5,10,15}$ of these in each round. We set $\theta^*$ as the vector $\left[0,\frac{1}{10}, \frac{2}{10}, ... \frac{9}{10}\right]$ and normalize it. To generate rewards, for each arm $a\in\mathcal{A}_t$, we first sample a Gaussian random variable with mean $\innerprod{a}{\theta^*}$ and variance 0.8, and clip this to be in $[0,1]$ ($[-1,0])$ if $\innerprod{a}{\theta^*} >0 (\leq 0)$ to form our bounded rewards. The upper and lower bounds on the rewards are generated separately to be away from the mean by a uniform random variable in $[0,0.5]$ and these are also clipped the same way as the rewards. That is, arms with positive (non-positive) mean reward are forced to have bounds in $[0,1]$ ($[-1,0]$). The subgaussian upper bound for OFUL is provided as $0.5$, the worst-case subgaussian factor for any random variable bounded in either $[0,1]$ or $[-1,0]$ while R-OFUL computes tighter subgaussian factors based on the mean bounds and uses $\gamma$ as in Equation \ref{eq: ellipsoid defn} to choose arms. We also use the norm bounds on $\theta^*$ to $m=0, M=1$. We average our results over 200 independent runs. We see that R-OFUL consistently achieves much lower regret than the vanilla variant. Further, we also observe that empirically, R-OFUL restricts arms and only chooses between 2-3 arms per round on average. This leads to a $6-6.7\times$ computation speed up compared to vanilla OFUL.

\section{Global Under-exploration for Stochastic Multi-armed Bandits with Mean Bounds}\label{sec: Stochastic}

Now we move on to the stochastic Multi-armed Bandit (MAB) setting. We recall that in this case, there is no linear structure on the rewards. At each time, the agent picks one of $K_0$ arms, and observes a reward that is randomly sampled from the distribution that is associated with this arm. The agent can also access the tuples $(l_k,u_k)$ for all arms $k\in[K_0]$ to inform its choices. At round $T$, the agent aims to minimize the expected cumulative regret $R_T = \sum_{t=1}^T \expec[\mu^* - Y_t]$.

Prior work in the \cite{zhang2017transfer,combes2017minimal} have investigated this problem before, the for Bernoulli rewards, and the latter as a structured bandit problem. The B-KL-UCB algorithm of the former sets arms indices as the clipped version (using the provided bounds) of the the standard KL-UCB indices from \cite{garivier2011kl}. The latter proposes OSSB for general reward distributions where the agent decides arms at each round by solving a distribution-dependent, semi-infinite optimization problem. The authors also prove that OSSB achieves asymptotically optimal regret for Bernoulli and Gaussian rewards when reward distributions are known apriori. In the special case of Bernoulli rewards, OSSSB and B-KL-UCB are equivalent (see Appendix \ref{apx: OSSB opt solution}), and are thus optimal. 

However, it is often not practical to assume that the reward distributions are known apriori. Further, even with this knowledge, it is unclear how computationally tractable the per-round optimization problem of OSSB is. While the Bernoulli version in B-KL-UCB is still applicable in this case, the KL-UCB indices for each arm in each round are themselves optimization problems with no known closed form solution. As the number of arms grow, computing these indices efficiently poses a challenge.

The UCB algorithm of \cite{auer2002finite} provides an index-based solution that is inexpensive to compute, albeit at the cost of the optimal regret performance. Specifically, it is well known that in the vanilla MAB setting, KL-UCB always outperforms UCB for bounded reward distributions, with larger gains when the mean rewards are closer to the extremities. This is mainly due to the fact that UCB chooses the worst-case exploration rate for any bounded distribution, while KL-UCB adapts its rate according to the (unknown) value of the mean rewards. Works such as \cite{liu2018ucboost} address the computation issue by using a set of semi-distance functions to boost the UCB index. This leads to a trade-off between optimal performance and efficient compute.

We propose Global Under-Exploration (GLUE) for Stochastic MABs with Mean Bounds in Algorithm \ref{alg: GLUE}. It draws inspiration from KL-UCB, adapting its exploration rate to the location of the arm means using the provided mean bounds but like UCB, provides arm indices that are inexpensive to compute. We note that unlike B-KL-UCB, GLUE is {\em not} the clipped version of vanilla UCB. In particular, GLUE is {\em not optimistic}, i.e, its arm indices are not high-probability upper bounds to the true mean of the arm. The algorithm works in two phases:

\begin{algorithm}[tb]
    \begin{algorithmic}[1]
        \STATE{{\bf Inputs:} Upper and lower bounds for each arm $k\in [K_0]: u_k,l_k$. }
        \STATE{{\bf Pruning Phase:}}
        \STATE{Define $l_{max} = \max_{k\in K_0} l_k$ and eliminate all arms $i\in [K_0]$ with $u_i<l_{max}$.}
        \STATE{Reindex the remaining arms to be in $\{1,2,...,K\}$.}
        \STATE{{\bf Learning Phase:}}
        \STATE{Set UCB index scaling parameters $\psi_k$ for each arm as in Equation \eqref{eq: psi_k}.}
        \FOR{$t = 1,2,3,...$}
        \STATE{Play arm $A_t = \argmax_{k\in [K]} U_k(t-1)$ and observe reward $Y_t$}.
        \STATE{Increment $T_{A_t}(t)$ and update $\hat{\mu}_{A_t}(t)$.}
        \STATE{Update $U_k(t)$ as in Equation \eqref{eq:Uk}.}
        \ENDFOR
    \end{algorithmic}
    \caption{\textbf{GL}obal \textbf{U}nder-\textbf{E}xplore (GLUE) for MABs with Mean Bounds}
    \label{alg: GLUE}
\end{algorithm}

\noindent\textbf{Pruning Phase:} Before the start of the online learning process, GLUE examines the provided mean bounds of each of the $K_0$ arms are prunes out any deterministically suboptimal arms. Specifically, we compute $l_{max} = \max_{k\in[K_0]} l_k$ and discard any arm $k$ with $u_k<l_{max}$. Such a strategy is also followed by B-KL-UCB of \cite{zhang2017transfer}.

\noindent\textbf{Learning Phase:} Let $[K] = \{1, 2, ..., K\}$ be the (re-indexed) set of pruned arms, with $K\leq K_0$. For each of the arm $k\in[K]$, we compute
\begin{align*}\label{eq: psi_k}
    \sigma_k^2 = \begin{cases}
        c(l_k) &\text{if }l>0.5\\
        c(u_k) &\text{if }u<0.5\\
        0.25 &\text{otherwise}
    \end{cases}, ~~~~\psi_k = \begin{cases}
        \sigma_{k_{max}} &\text{if }l_{k_{max}}>0.5\\
        \sigma_k &\text{otherwise}.
    \end{cases}\numberthis
\end{align*}

Here, $k_{max} = \argmax_{k\in[K]} l_k$, and $c(y) = \max_{s\in \mathbb{R}} \frac{2}{s^2}f_y(s)$ with $f_y(x) = ye^{(1-y)x} + (1-y)e^{-yx}$.
Let $T_k(n) = \sum_{t=1}^n\ind\{A_t = k\}$ and $\hat\mu_k(n) = \frac{1}{T_k(n)}\sum_{t=1}^n Y_t\ind\{A_t=k\}$ be the number of plays and the empirical mean reward obtained from arm $k$ up to time $n$ respectively. The index of arm $k$ at time $t$ is then set to be 
\small
\begin{align*}\label{eq:Uk}
    U_k(t) &= \min\left\{u_k, \hat\mu_k(t) + \sqrt{\frac{2 \psi_k^2 \log(f(t+1))}{T_k(t)}}\right\} \text{ with } f(t) = 1+ t\log^2(t).\numberthis
\end{align*}
\normalsize

We observe two key things here. First, for each arm $k$, $U_k(t)$ is set using the quantity $\psi_k$ rather than the true subgaussian factor $\sigma_k^2$. When $l_{max}> 0.5$, $\psi_k$ for {\em all arms} is set as the subgaussian factor of the arm with the largest lower bound $k_{max}$ which is \underline{\em strictly lower} than $\sigma_k^2$ for $k\neq k_{max}$. Thus in general, $\psi_k\leq \sigma_k^2$. Second, arm indices are clipped at the respective upper bounds, which is sensible as these indices are a representation of our belief of true arm means. We also note that using $\psi_k = 0.25$ and setting $U_k(t)$ without clipping at $u_k$ recovers the standard Upper Confidence Bound (UCB) algorithm in \cite{auer2002finite}.

\subsection{Key Ideas} 	
\textbf{1. Sharper subgaussian factors:} As we saw in Section \ref{sec: improved subgaussian factors}, bounds on the mean of a random variable (here, the arm rewards), provide tighter estimates of its subgausssian factor (Corollary \ref{cor: sg factors with bounds}). This is reflected in our definition of $\sigma_k$ in Equation \ref{eq: psi_k}.

\textbf{2. Underexploring suboptimal arms using $\psi_k$:} Consider a standard MAB instance with $K$ arms where it is known that arm $k$ provides rewards with a subgaussian factor of $\sigma_k^2$. The standard UCB algorithm would then explore arm $k$ at the rate dictated by the corresponding subgaussian factor. Now suppose additionally, that it was known that the (unknown) \emph{best arm} produced rewards with subgaussian factor $\sigma_{best}^2$. We construct the algorithm $\mathcal{A}$ which sets the index of arm $k$ analogous to UCB, but with exploration rate dictated by $\min\{\sigma_k^2,\sigma_{best}^2\}$. As a result, $\mathcal{A}$ potentially assumes {\em incorrect} s.g-factors for suboptimal arms. Consequently, the indices of $\mathcal{A}$ are no longer upper confidence bounds to the arm means, i.e., $\mathcal{A}$ is {\em not optimistic}. The key difference in the dynamics of UCB and $\mathcal{A}$ is that the latter potentially explores suboptimal arms {\em less frequently} than the former. This would lead to $\mathcal{A}$ incurring lower regret than standard UCB.
	
In GLUE, $\min\{\sigma_k^2, \sigma_{best}^2\}$ is analogous to $\psi_k$. Specifically, when $l_{max}>0.5$, the side information allows us to estimate an upper bound on the subgaussian factor of the {\em best arm}. We note that this upper bound is computed using the arm $k_{max}$; the identity of the best arm is still unknown. Thus, the regret minimization problem remains non-trivial in this case.

\subsection{Regret Upper Bounds for GLUE}\label{sec: Regret of GLUE}
In this section, we study the regret performance of GLUE. We regard improvements from pruning as trivial and analyze regret for the set of arms that remain after pruning. We assume without loss of generality that \textit{after pruning} the arms are indexed in non-decreasing order of mean, i.e., $\mu^* = \mu_1 > \mu_2\geq ... \geq \mu_K$, where $K\leq K_0$. We also define the sub-optimality gap of an arm $k$ as $\Delta_k = \mu^*-\mu_k$. The following theorem bounds the regret of GLUE, and also presents its asymptotic regret scaling which is an upper bound to $\limsup_{n\rightarrow\infty}\frac{R_n}{\log(n)}$. This quantity captures the long-term (logarithmic) contribution of arms towards regret.
	
\begin{theorem}\label{thm: GLUE regret}
    Let $K_1(\delta) = \{k\in[K]: \mu^* > u_k + \delta, k > 1\}$ and $K_2(\delta) = \{k\in[K]: \mu^*\leq u_k + \delta, k > 1\}$, for any $\delta>0$. Then, the expected cumulative regret of Algorithm \ref{alg: GLUE} satisfies $R_n \leq \inf_{\delta > 0} R_n(\delta)$ with 
    \begin{align*}
        R_n(\delta) &\leq \sum_{k\in K_1(\delta)} \frac{5\psi_1^2\Delta_k}{\left(\max\{\delta , \mu^*-u_k\}\right)^2} +\sum_{k\in K_2(\delta)} \left({\scriptstyle \Delta_k}(1+q(n)) + \frac{5\psi_1^2 + 2\psi_k^2\log f(n)}{\Delta_k}\right).
    \end{align*}
    Here, the function $q(n)\sim \Theta\left(\log(f(n))^\frac{2}{3}\right)$. Further, the asymptotic regret of GLUE satisfies \small
    \begin{align}\label{eq: GLUE asymptotic}
        \limsup_{n\rightarrow\infty} \frac{R_n}{\log(n)} &\leq \sum_{k\in K_2(0)} \frac{2\psi_k^2}{\Delta_k}.
    \end{align}
    \normalsize
\end{theorem}

\begin{proof}[Proof Sketch for Theorem \ref{thm: GLUE regret}]
As is usual in the regret analysis of stochastic MABs, we upper bound the number of times a suboptimal arm is played, which translates to a bound on the cumulative regret. In contrast however, we can not rely on the UCB property of the indices any longer. 

Recall that we only need to consider the set of $K$ arms that remain after pruning. Theorem \ref{thm: improved subgaussian factors} and Corollary \ref{cor: sg factors with bounds} gives us that $\sigma_k$ is an upper bound on the true s.g-factor for arm $k$. We first show that the best arm $k^*$ is always $\psi_k-$subgaussian. Then, we establish that the asymptotic contribution to regret of any arm in $K_1(\delta)$ is a constant. We deem these arms to be ``\texttt{meta-pruned}''. For $K_2(\delta)$, we show that the use of pseudo-variances is sufficient to explore these arms at a logarithmic rate. The theorem then follows by combining these two results, with the full proof in Appendix \ref{apx: GLUE section}.
\end{proof}

\begin{figure*}[t]
    \centering
            \begin{subfigure}[b]{0.5\textwidth}
            \centering
                \includegraphics[width = 0.98\textwidth]{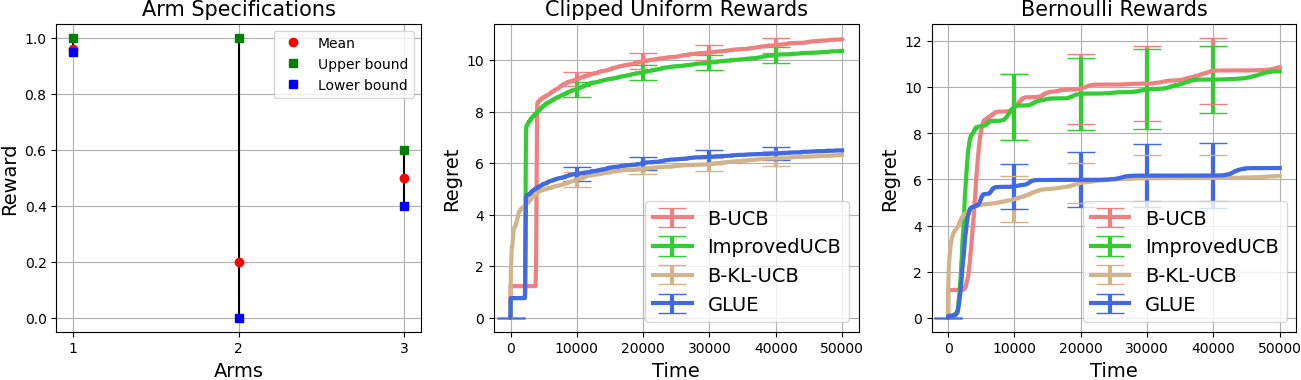}
                \caption{$l_{max}>0.5$; Arm 3 is meta-pruned}
                \label{fig1: lmax figure}
            \end{subfigure}%
            \begin{subfigure}[b]{0.5\textwidth}
            \centering
                \includegraphics[width = 0.98\textwidth]{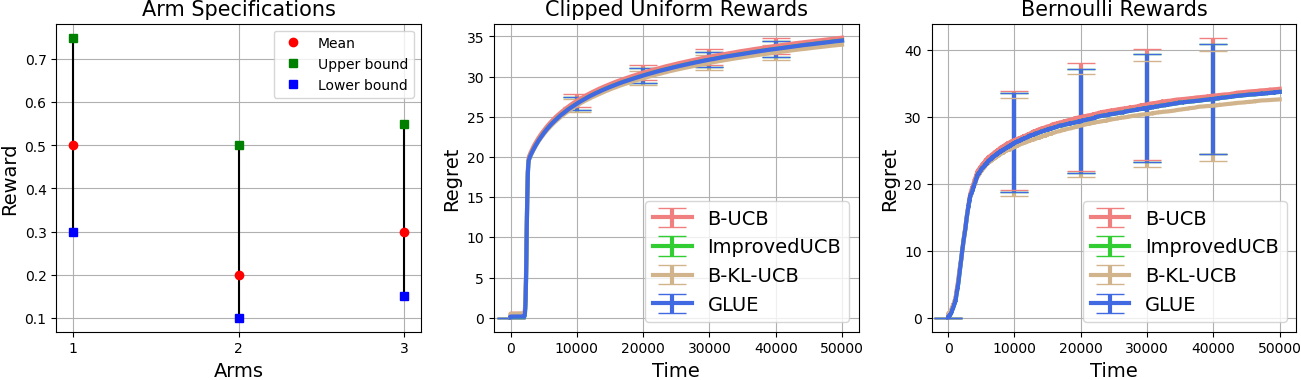}
                \caption{Non-Informative Bounds; Arm 2 is meta-pruned}
                \label{fig1: meta pruning figure}
            \end{subfigure}%
            
            \begin{subfigure}[b]{0.5\textwidth}
            \centering
                \includegraphics[width =0.98\textwidth]{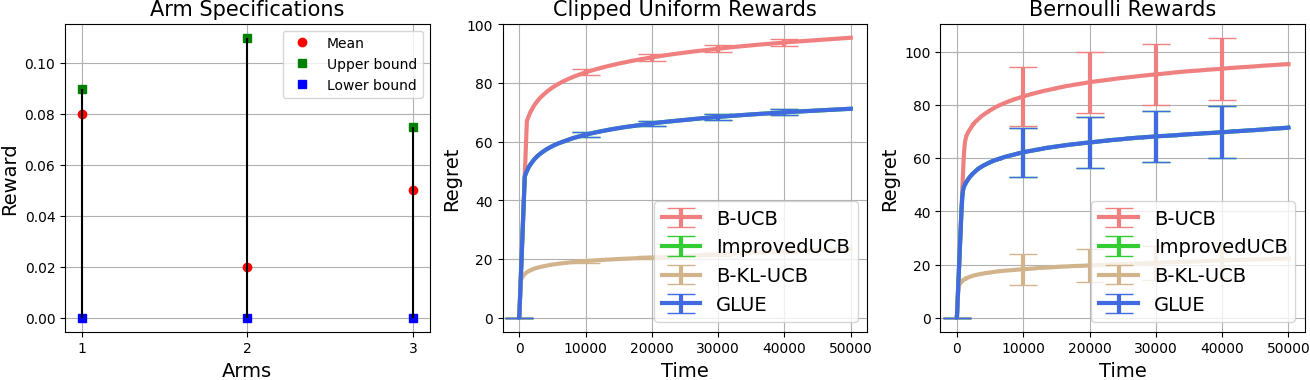}
                \caption{$l_{max}<0.5$; ImprovedUCB and GLUE are equivalent}
                \label{fig1: low rewards}
            \end{subfigure}%
            \begin{subfigure}[b]{0.5\textwidth}
            \centering
                \includegraphics[width = 0.98\textwidth]{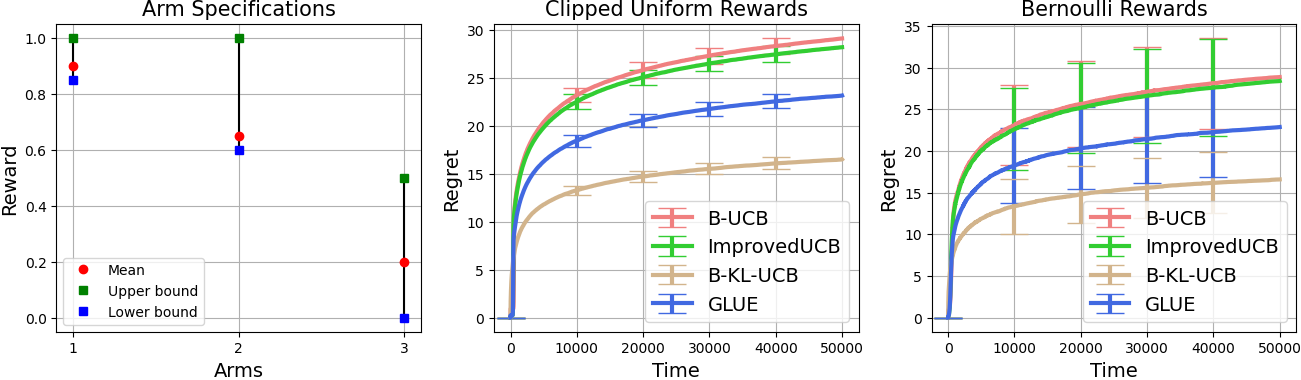}
                \caption{Non-informative bounds; Clipping only}
                \label{fig1: no info}
            \end{subfigure}%
        
    \caption{\small Empirical validation for Stochastic MABs with Mean Bounds under Clipped Uniform and Bernoulli rewards: Each row corresponds to a different specification of arm means and mean bounds shown in the left subplot. Regret performance under clipped uniform and Bernoulli rewards are shown in the latter two. The regret is averaged over 200 runs and error bars of one standard deviation are shown. In Figure \ref{fig1: lmax figure}, for each arm, $\psi_k=\sigma_1$. In Figure \ref{fig1: meta pruning figure}, the bounds reveal no non-trivial information, however, Arm 2 is meta-pruned. In Figure \ref{fig1: low rewards}, we set $\psi_k=\sigma_k$ since $l_{max}<0.5$ and thus, ImprovedUCB and GLUE coincide. In Figure \ref{fig1: no info}, the bounds do not provide non-trivial information about subgaussian factors and are only used to clip rewards. B-UCB, UCBImproved and GLUE compute arm choices $11\times$ faster than B-KL-UCB on average.}
    \label{fig: main paper experiments}
\end{figure*}
\subsection{Comparison and Discussions}\label{sec: comparison}
\textbf{Regret Upper Bounds: GLUE vs Existing Algorithms:} We compare the regret upper bound of GLUE to a slew of baselines in Table \ref{table: GLUE vs others}. Here, B-UCB is the clipped version of the  vanilla UCB algorithm of \cite{auer2002finite}. It is equivalent to using GLUE with $\psi_k = 0.25$, the worst-case subgaussian factor for each arm $k$. Further, $d(p,q) = p\log\left(\nicefrac{p}{q}\right) + (1-p)\log\left(\nicefrac{(1-p)}{(1-q)}\right)$ is the Bernoulli Kullback-Liebler divergence.

\begin{table}[h]
    \centering
    \caption{Asymptotic Regret Comparisons: Columns list the algorithm, the (bounded) distribution of rewards, whether it displays meta-pruning, and the {\bf asymptotic} regret upper bound.}
    \label{table: GLUE vs others}
        \resizebox{0.44\textwidth}{!}{%
    \begin{tabular}{|c|c|c|c|c|}
        \hline
        {\bf Algorithm} & \textbf{Reward dist.} & \textbf{Meta-} & \textbf{Asymptotic} \\
        & \textbf{(bounded)} & \textbf{pruning} & \textbf{Regret} \\ \hline
        UCB & Any  & No &  $\sum\limits_{\scriptscriptstyle k\in[K],k\neq 1} \frac{2}{\Delta_k}\cdot \frac{(b-a)^2}{4}$ \\ \hline
        B-UCB & Any  & Yes  &  $\sum\limits_{\scriptscriptstyle k\in K_2(0)} \frac{2}{\Delta_k}\cdot \frac{(b-a)^2}{4}$ \\ \hline
        KL-UCB & Any  & No  & $\sum\limits_{\scriptscriptstyle k\in [K], k\neq 1} \frac{\Delta_k}{d(\mu_k,\mu^*)}$ \\ \hline
        B-KL-UCB & Any  & Yes  & $\sum\limits_{\scriptscriptstyle k\in K_2(0)} \frac{\Delta_k}{d(\mu_k,\mu^*)}$ \\ \hline
        OSSB & Bernoulli & Yes  &  $\sum\limits_{\scriptscriptstyle k\in K_2(0)} \frac{\Delta_k}{d(\mu_k,\mu^*)}$ \\ \hline
        {\color{blue} GLUE} & {\color{blue} Any } & {\color{blue} Yes}  & {\color{blue}$\sum\limits_{\scriptscriptstyle k\in K_2(0)} \frac{2\psi_k^2}{\Delta_k}$ }\\ \hline
    \end{tabular}}
\end{table}

The vanilla UCB and KL-UCB algorithms do not display meta-pruning as they do not use the mean bounds. The regret bound of OSSB in \cite{combes2017minimal} matches that of B-KL-UCB for Bernoulli rewards (see Appendix \ref{apx: OSSB opt solution}) and is optimal.

The UCB, B-UCB and GLUE algorithms require $\mathcal{O}(K_0)$ compute per iteration in the worst-case (when no arms are pruned). KL-UCB, B-KL-UCB (and OSSB for Bernoulli rewards) require $\mathcal{O}(\alpha_{opt} K_0)$ computational complexity, with $\alpha_{opt}$ the cost of solving the index optimization problem for each arm. 

\textbf{Uncertain mean bounds:} Our focus in this work has been the case when the provided mean bounds hold with probability $1$. The immediate follow-up would be the case where these bounds only hold for each arm $k$ with some probability $p_k<1$. Practically, such settings occur when we are given historical data about plays from each arm (this is the setting studied in \cite{shivaswamy2012multi}). The following corollary gives a horizon-dependent regret bound for GLUE in this case.

\begin{corollary}\label{corr: uncertain bounds}
    Let $\mu_k\in [l_k,u_k]$ hold with probability $(1- p_k)$ for each arm $k\in [K]$. Then by time $n$, GLUE suffers an additional regret of $n\sum\limits_{\scriptscriptstyle k\in [K]}p_k$ over Theorem \ref{thm: GLUE regret}.
\end{corollary}

If we know the horizon to be $T$ and each arm is observed $\mathcal{O}(\log T)$ times in the provided history, using standard Chernoff-type arguments, we easily see that $\sum_{k\in[K]} p_k = O(1/T)$. Thus, the additional regret incurred by time $T$ is simply a constant. However, the cases when the number of samples for an arm are \emph{sub-logarithmic} in the horizon and that with an \emph{unknown horizon} are left open.	

\textbf{Lack of global under-exploration in R-OFUL:} In the stochastic MAB with bounds, we used GLUE to explore {\em all} arms at rate $\psi_k^2$ --- the subgaussian factor of the best arm (when $l_{max}>0.5$). This was possible because the estimate of the best arm remained a $\psi_k^2$-subgaussian random variable, and was thus explored at the correct rate. In the linear case, the estimates for each arm uses all samples collected so far in $\hat\theta_t$, the ridge regression estimate of $\theta^*$. The estimate of the best arm is thus no longer $\psi_k^2$-subgaussian and under-exploring arms in the set $\A_r(t)$ using $\psi(l_{max},b)$, for example, would lead to poor estimates even for the best arm and thus larger regret.

\textbf{Empirical Comparisons:} We compare GLUE with B-UCB, ImprovedUCB (GLUE with $\sigma_k$ instead of $\psi_k$ in $U_k(t)$) and B-KL-UCB for clipped uniform distributions (drawn from a uniform distribution in an interval around the mean that is fully contained in $[0,1]$) and Bernoulli rewards. We chose not to compare with OSSB since it is unclear how the associated optimization problem can be solved for this distributions. We drop UCB and KL-UCB since the clipped variants in B-UCB and B-KL-UCB, respectively, are never worse then the vanilla counterprats. All plots are averaged over 200 independent runs.

When $l_{max}$ is close to 1, we see that GLUE outperforms the optimistic UCB-based baselines (B-UCB and ImprovedUCB) and is comparable to $B-KL-UCB$. We note that this is the case where KL variants enjoy maximum improvements over UCB variants, but GLUE competes with the optimal in this case.  In the case when all upper bounds are lesser than $0.5$, GLUE matches the performance of ImprovedUCB, thus improving over B-UCB. However, since there is no information about the best arm in this case, B-KL-UCB outperforms GLUE. We also note that each of B-UCB, ImprovedUCB and GLUE compute each iteration $11\times$ faster than B-KL-UCB on average.

\section{A Use Case: Learning from Confounded Logs}\label{sec: Confounded Logs}

Up to this point, we have assumed that the learning agent has access to a tuple of mean bounds for each arm before the start of the online learning process. In this section, we describe a practically motivated scenario in which such mean bounds arise naturally. Specifically, we consider the task of extracting non-trivial bounds on means from partially confounded data from an optimal oracle. We show that under mild assumptions, this problem leads to the stochastic MAB problem (and a Linear Bandit) with mean bounds.

 \subsection{Confounded Logs for Stochastic Multi-armed Bandits}
 \textbf{The Oracle Environment:} We consider a contextual environment, where nature samples a context vector $(z,u)\in \mathcal{C} = \mathcal{Z}\times \mathcal{U}$ from an unknown but fixed distribution $\mathbb{P}$. We assume that sets $\mathcal{Z}$ and $\mathcal{U}$ are both \emph{discrete}. At each time any of the $K_0$ actions from the set $\mathcal{A} = \{1,2,...,K_0\}$ can be taken. The reward of each arm $k\in \mathcal{A}$ for a context $(z,u)\in\mathcal{C}$ has mean $\mu_{k,z,u}$ and support $[0,1]$ (with appropriate shifting and scaling, this can be generalized for any finite interval $[a,b]$). For each context $(z,u) \in \mathcal{C}$, let there be a {\em unique best arm} $k_{z,u}^*$ and let $\mu^*_{z,u}$ be the mean of this arm.
	
\textbf{Confounded Logs:} The oracle observes the complete context $(z_t,u_t)\in \mathcal{C}$ at each time $t$ and also knows the optimal arm $k^*_{z_t,u_t}$ for this context. She picks this arm and observes an independently sampled reward $y_t$ with mean value $\mu_{k^*_{z_t,u_t}, z_t, u_t} = \mu^*_{z_t,u_t}$. She logs the information in a data set while omitting the partial context $u_t$. In particular, she creates the data set $\mathcal{D} = \{(z_t, k_t, y_t): t\in \mathbb{N}\}$.
	
\textbf{The Agent Environment:} A new agent is provided with the oracle's log. In this paper, we consider the {\em infinite data} setting. The agent makes sequential decisions about the choice of arms having observed the context $z_t\in \mathcal{Z}$ at each time $t = \{1,2,...\}$, while the part of the context $u\in \mathcal{U}$ is {\em hidden} from this agent. Let $a_t$ be the arm that is chosen, $z_t$ be the context, and $Y_t$ be the reward at time $t$. Define the average reward of arm $k\in \mathcal{A}$ under the {\em observed} context $z\in \mathcal{Z}$ as $\mu_{k,z} = \sum_{u\in \mathcal{U}} \mu_{k,z,u} \prob(u|z)$.
	
The optimal reward of the agent under context $z \in \mathcal{Z}$ is defined as 
	$\mu^*_{z} = \max_{k\in \mathcal{A} }\mu_{k,z}$. The agent aims to minimize its cumulative regret for each context separately. The cumulative reward for each $z\in \mathcal{Z}$ at time $T$ is defined as:
	$R_z(T) = \expec\left[\sum_{t} \mathbb{I}(z_t = z)(\mu^*_z - Y_t)\right]$
\subsubsection{Transferring Knowledge through Bounds} \label{sec: transfer-bounds}
	
 The agent is interested in the quantities $\mu_{k,z}$, the mean reward of arm $k$ under the partial context $z$, for all arms $k\in \mathcal{A}$ and contexts $z\in \mathcal{Z}$, to minimize its cumulative regret. As the oracle only plays an optimal arm after seeing the hidden context $u$, the log provided by the oracle is biased and thus $\mu_{k,z}$ can not be recovered from the log in general.
	However, it is possible to extract non-trivial upper and lower bounds on the average $\mu_{k,z}$. In a binary reward and action setting, similar observations have been made in \cite{zhang2017transfer}. Alternative approaches to this problem, that include assuming bounds on the inverse probability weighting among others, are discussed in Section~\ref{sec: related work}.
	
Our assumption below is different, and in a setting where there are {\em more than two arms}. We specify that the logs have been collected using a policy that plays an {\em optimal arm} for each $(z,u)\in \mathcal{C}$, but do not explicitly impose conditions on the distributions. Instead, in Assumption~\ref{asn:gap}, we impose a separation condition on the {\em means} of the arms conditioned on the full context. We define: 
\begin{definition}
    \label{defn:gap}
    Let us define $\underline{\delta}_z,\overline{\delta}_z$ for each $z\in \mathcal{Z}$ as follows:
    \small
    \begin{align*}
        \underline{\delta}_z \leq \min_{u \in \mathcal{U}} \left[ \mu^*_{z,u} - \max_{k \in \mathcal{A}, k \neq k_{z,u}^* } \mu_{k,z,u} \right],~~~~\overline{\delta}_z \geq \max_{u \in \mathcal{U}} \left[ \mu^*_{z,u} - \min_{k\in \mathcal{A}, k \neq k^*_{z,u} }\mu_{k,z,u} \right].
    \end{align*}
    \normalsize
\end{definition}
	
Thus, for each observed context $z,$ these quantities specify the sub-optimality gaps between the best and second best arms, as well as the best and the worst arms, respectively and which hold uniformly over the hidden contexts $u \in \mathcal{U}$.
	
\begin{assumption}[Separation Assumption]\label{asn:gap}
    The vectors $\{\underline{\delta}_z, \overline{\delta}_z : z\in \mathcal{Z}\}$ are provided as a part of the log. Additionally, for each $z\in \mathcal{Z}$, we have that $\underline{\delta}_z>0$.
\end{assumption}
\noindent\textbf{Remarks on Model and Assumption~\ref{asn:gap}:}

$\bullet~$\noindent\underline{\em Relation to Gap in Agent Space:} We note that these gaps in the \emph{latent space} do not allow us to infer the gap in {\em the agent space}. However, due to optimal play by oracle, these gaps help in obtaining mean bounds for \emph{all} arms, even those which have not been recorded in the log (Theorem~\ref{thm: transfer of knowledge theorem}).

$\bullet~$\noindent\underline{\em Interpretation:} 
The gap $\underline{\delta}_z$ gives us information about the hardest arm to differentiate in the latent space when the full context was observed ($\overline{\delta}_z$ analogously is for the easiest arm to discredit). 
We also note that our approach can still be applied when the trivial bound $\overline{\delta}_z=1$, or universal bounds not depending on context $\overline{\delta} \geq \overline{\delta}_z$, $ \underline{\delta} \leq \underline{\delta}_z $ are used, leading to bounds that are not as tight.

$\bullet~$\noindent\underline{\em Motivating Healthcare Example:}
Often, hospital medical records contain information $z_t$ about the patient, the treatment given $k_t$, and the corresponding reward $y_t$ that is obtained (in terms of patient's health outcome). However, a \textit{good doctor} looks at some other information $u_t$ (that is not recorded) during consultation and prescribes the best action $k_t$ under the full context $(z_t,u_t)$. If one is now tasked with developing a machine learning algorithm (an agent) to automate prescriptions given the medical record $z$, this agent algorithm needs to find the best treatment $k(z)$ on average over $\mathbb{P}(u|z)$. Furthermore, the gaps on treatments effects can potentially be inferred from other data sets like placebo-controlled trials.

$\bullet~$\noindent\underline{\em Existing alternate assumptions:} Studies in \cite{yadlowsky2022bounds,zhao2019sensitivity} impose conditions on bounded sensitivity of the effects with respect to the hidden/unrecorded context (effectively, that this context does not significantly alter the effect). In our work, we explore the other alternative where the treatment has been chosen optimally with respect to the hidden/unrecorded context, and allows strong dependence on this hidden context.

\textbf{Quantities computed from log data:}\label{sec: data from logs}
The following quantities can now be computed by the agent from the observed log for each arm $k\in [K_0]$ and each visible context $z\in \mathcal{Z}$:

\textbf{1.} $p_z(k):$ The probability of picking arm $k$ under each context $z$ is denoted as $p_z(k)$. Mathematically, $p_z(k) = \sum\limits_{ u\in\mathcal{U}: k= k^*_{z,u}} \prob(u|z).$

\textbf{2.} $\mu_z:$ The average reward observed under each context $z$ is defined as $\mu_z$. It can be computed by averaging observed rewards for all the entries with context $z$. This can be written as $\mu_z = \sum\limits_{u\in \mathcal{U}} \mu^*_{z,u} \prob(u|z)$.

\textbf{3.} $\mu_z(k)$: The contribution from arm $k$ to the average reward $\mu_z$. Thus we have that $\mu_z(k) = \sum\limits_{u\in\mathcal{U}:k=k^*_{z,u}} \mu^*_{z.u}\prob(u|z).$

\textbf{4.} $K_>(k,z)$: Finally, we identify the set of arms with "large rewards" $K_>(k,z) = \{k'\in[K_0]: k' \neq k, \mu_z(k') > \overline{\delta}_z p_z(k')\}$ for each $k\in [K]$ and $z\in \mathcal{Z}$. 

 To see that these quantities can indeed be inferred, assume that the logs are given as an infinite table with columns $Z,A,Y$. Now, we collapse rows that share $Z =z, A = k$ into a single row with $Y$ now being the mean reward of all such rows in the original table.

 With this new finite table, to compute $p_z(k)$, we fix all rows with $Z = z$ and compute the fraction of these that have $A = k$. $\mu_z(k)$ is the average reward in all rows that have $Z = z, A = k$, and $\mu_z = \sum_{k\in\mathcal{A}}\mu_z(k)$. Finally, the set $K_{>}(k,z)$ can be computed from $\mu_z(k),p_z(k)$ and the upper bound on the latent suboptimality gap $\overline\delta_z.$ 
	
\textbf{Bounds in terms of computed quantities:}
Using the quantities defined above, the following theorem describes how one can compute lower and upper bounds on the arm rewards in the agent space. 

\begin{theorem}\label{thm: transfer of knowledge theorem}
    The following statements are true for each $k\in [K_0]$ and $z\in \mathcal{Z}$:
    
    1. \textbf{Upper Bound:} $\mu_{k,z} \leq u_{k,z} := \mu_z - \underline{\delta}_z(1-p_z(k))$.
    
    2. \textbf{Lower Bound:} $\mu_{k,z} \geq l_{k,z} := \mu_z(k) + \sum\limits_{k' \in K_>(k,z)} \left[\mu_z(k')-\overline{\delta}_z p_z(k')\right]$.
\end{theorem}
Note that these bounds can be provided for all arms $k\in [K_0]$ and contexts $z\in \mathcal{Z}$. Specifically, bounds can also be extracted for the arms that are never played in the log. This comes as a result of the gaps defined in Definition \ref{defn:gap}. Proving this results requires a careful use of the total probability theorem which can be found in Appendix \ref{apx: confounded logs}

\noindent\textbf{Remarks:}\\
\noindent{\em 1. \underline{Finite Log}:} While we study the case of infnite logs, our approach can be readily extended to finite logs by replacing the quantities $\mu_z, \mu_z(k), p_z(k)$ with empirical estimates and using standard Chernoff bounds to augment the bounds in Theorem \ref{thm: transfer of knowledge theorem} to hold with a probability which is a function of the size of the log. Then, the performance of the learning algorithm is as in Corollary \ref{corr: uncertain bounds} and the discussion that follows.\\
\noindent{\em 2. \underline{Distribution shifts}:} Our results can also be used in the case where under a fixed visible context, the conditional distribution of the hidden contexts changes from oracle to the learner. In particular, for some $z\in \mathcal{Z}$, suppose that $\psi_z \geq \max_{u\in \mathcal{U}} |\prob_o(u|z) - \prob_l(u|z)|$, where $\prob_o$ and $\prob_l$ are used to differentiate the oracle and learner environments respectively. Using $\psi_z$, we can readily modify our results in Theorem \ref{thm:  transfer of knowledge theorem} to produce upper and lower bounds for arm rewards under the context $z$ when the distributions are no longer the same.
	
Now, we show the existence of instances for which our bounds are tight. We say an instance is {\em admissible} if and only if it satisfies all the statistics generated by the log data. The following proposition shows all the bounds defined in Theorem \ref{thm: transfer of knowledge theorem} are partially tight.
\begin{proposition}[Tightness of Transfer]\label{prop: tightness of transfer}
    For any log with $u_{k,z},l_{k,z}$ as defined in Theorem \ref{thm: transfer of knowledge theorem} the following statements hold:\\
    1. There exists an admissible instance where upper bounds $\mu_{k,z} = u_{k,z}$, for all $k\in [K_0]$ and $z\in \mathcal{Z}$.\\
    2. For each $k\in [K_0]$, there exists a (separate) admissible instance such that $\mu_{k,z} = l_{k,z}$ for each $z\in \mathcal{Z}$. 
\end{proposition}

The learning procedure is then carried out by using the bounds of Theorem \ref{thm: transfer of knowledge theorem} to instantiate GLUE (Algorithm \ref{alg: GLUE}, with regret as in Theorem \ref{thm: GLUE regret}) for each partial context $z \in \mathcal{Z}$ with $u_k = u_{k,z}$ and $l_k = l_{k,z}$. 

\subsection{Confounded Logs for Linear Bandits}
\noindent\textbf{The Oracle Environment:} Consider a linear bandit environment with $K$ arms $a_k = (a_{k,z}, a_{k,u})$ for $i\in[K]$ and (random) latent vector $\theta^* = (\theta^*_z, T_u)$ such that $\theta^*_z$ is fixed. Suppose $a,\theta^*\in \mathbb{R}^{p}$, and for all $k\in[K]$, $a_{u,k}, T_u\in \mathbb{R}^d$ Further, we assume that $T_u$ is such that each entry of the vector is always between $[m,M]$, and $\|a_k\| = 1$ for all $k\in[K]$ (the equality can be generalized to bounds on the norm). At each round $t$, $T_u(t)$ is drawn independently from some distribution $\mathcal{C}$ and the full vector $\theta^*(t) = (\theta^*_z, T_u(t))$ is revealed to the oracle, based on which the arm $A_t = \argmax_{k\in[K]} \innerprod{a_k}{\theta^*(t)}$ is played. The oracle then observes the reward $Y_t = \innerprod{A_t}{\theta^*(t)} + \eta_t$ where $\eta_t$ is the conditionally $\sigma(A_t)$ subgaussian bounded random variable with respect to the filtration generated by the reward observations up to time $t-1$ and arm choices up to time $t$.

\noindent\textbf{Confounded Logs:} We define $\hat{k}(T)$ to be the index of the best arm when the random realization of $T_u$ was $T$, i.e., $\hat{k}(T) = \argmax_{k\in[K], T_u=T}\innerprod{a_k}{\theta^*}$ . The oracle records the tuple $(\hat{k}(T_u(t)), \theta^*_z, Y_t)$ at the end of each round and omits the explicit information about $T_u(t)$. These tuples are collected over an infinite horizon to form the dataset $\mathcal{D} = \{(\hat{k}(T_u(t)), \theta^*_z, Y_t):t\in\mathbb{N}\}$.

\noindent\textbf{The Agent Environment:} The agent is provided with the infinite log generated by the oracle and needs to interact with the same environment with no other information about $\theta^*$ being revealed at each time. Specifically, at each round $t$, the agent chooses an arm $A_t\in \{a_k:k\in[K]\}$ and observes $Y_t = \innerprod{A_t}{\theta^*}+\eta_t$ with $T_u(t)$ being drawn independently according to $\mathcal{C}$ and $\eta_t$ the conditionally $\sigma(A_t)$-subgaussian noise as before. Since the latent vector $\theta^*$ is hidden and is random at each round, the agent seeks to minimize the average regret given by
\begin{align*}
    R_T &= \sum_{t=1}^T \max_{k\in[K]} \expec[\innerprod{a_k-A_t}{\theta^*}] \\
    &= \sum_{t=1}^T \max_{k\in[K]} \innerprod{a_{k,z}-A_{t,z}}{\theta^*_z} + \expec[\innerprod{a_{k,u}-A_{t,u}}{T_u}] \\
    &:= \sum_{t=1}^T \max_{k\in[K]} \innerprod{a_{k,z}-A_{t,z}}{\theta^*_z} + \innerprod{a_{k,u}-A_{t,u}}{\theta^*_u}
\end{align*}

In the above, $\theta^*_u$ us the average value of the vector $T_u$ under the distribution $\mathcal{C}$ and $A_{t,z}, A_{t,u}$ are the parts of the arm $A_t$ corresponding to parameters $\theta^*_z, \theta^*_u$ respectively. Note that the first term in the sum is known fully to the agent as $\theta^*_z$ is recorded in the oracle logs and thus, the learning problem of the agent now reduces to a Linear Bandit over $d$ dimensions (the dimension of $\theta^*_u$) using the `pseudo-rewards' $Y'_t =Y_t - \innerprod{A_t}{\theta^*_z}$ for the chosen arm $A_t$. The genie policy in this case is to always play the arm $k^* = \argmax_{k\in[K]} \innerprod{a_{k,z}}{\theta^*_z} + \innerprod{a_{k,u}}{\theta^*_u}$. We note that this definition averages out the randomness in $T_u$ and in general can not be inferred by knowing $\hat{k}(T_u)$ from the logs.

Mapping these back to our running healthcare example, the seen context $\theta^*_z$ are the known effects of each intervention in $\{a_k:k\in[K]\}$ on the patient that have been established using medical trials, while $T_u$ represents the responses patient to the intervention based on biomarkers that have not been explored in the trials. Thus, $\theta^*_u$ is the average of these unknown effects over the population of patients.

Further, the following quantities can be inferred from the logs:

\noindent\textbf{1.} $p_k$: The probability that arm $k$ was best in the oracle environment defined as $p_k = \prob_\mathcal{C}\left(\hat{k}(T_u(t)) = k\right)$.

\noindent\textbf{2.} $\nu_k$: The average reward obtained when arm $k$ was optimal. This can be written as $\nu_k = \expec\left[Y_t | \hat{k}(T_u(t)) = k\right]$.

Using these quantities, the following result suggests the mean bounds that the agent can infer from the logs:
\begin{theorem}\label{thm: transfer of knowledge linear case}
    For all arms $a_k:k\in[K]$, let $p_k, \nu_k$ be as defined above and $\ind_d$ be the vector of all 1's in $d$ dimensions. Then, we have that
    \begin{align*}
        \expec[\innerprod{a_k}{\theta^*}] -\nu_kp_k - (1-p_k) \innerprod{a_{k,z}}{\theta^*_z} \in \left[ md\innerprod{\ind_d}{a_{k,u}}, Md\innerprod{\ind_d}{a_{k,u}}\right].
    \end{align*}
\end{theorem}

Using these bounds, the agent can then employ R-OFUL (Algorithm \ref{alg: R-OFUL}) in order to minimize its regret. Contrary to the stochastic setting, due to the linearity of rewards, we need not make any assumptions on the suboptimality gaps in the oracle environment to provide these mean bounds.

\subsection{Empirical Validation}\label{sec: empirical transfer}

\noindent\textbf{Confounded Logs for Stochastic MABs:} 

We present a recommendation systems example where an agent is tasked with learning the best movie to recommend to users in an online manner: the agent observes the user's occupation at each round and solves an independent bandit instance for each occupation. To assist in its learning, logs collected from an oracle are provided to this agent. The oracle has access to more information about users and observes occupation, age and gender in order to recommend movies, but only records the occupation for each user. We assume that environments do not change between the oracle and the agent, i.e., users are drawn randomly from the same population in both cases.

For this set of experiments, we use the Movielens 1M dataset of \cite{harper2015movielens}. This data set consists of $6040$ users, from whom over $1$ million ratings of $3952$ movies are collected. Each user is associated with a gender, age, occupation and zip-code. In this work, we ignore the zip-code. The ratings (or rewards) lie in the interval $[1,5]$. These are normalized to $[0,1]$ through normal shifting and scaling. The reward matrix of size $6040 \times 3952$ is then completed using the SoftImpute algorithm from \cite{mazumder2010spectral}. Post matrix completion, we delete all users who's total reward across all movies is below 1500. We also further cluster the age attribute to have 4 classes: $0-17, 18-25, 25-49,50+$. We also combine the occupation attributes appropriately to have 8 classes namely: Student, Academic, Scientific, Office, Arts, Law, Retired and Others. 
	
After the above, the Occupation attribute is treated as the visible one, age and gender are hidden. We form meta-users by averaging all the users according to the tuple of attributes (gender,age). The reward matrix is also modified to now have each row represent a particular (gender, age) realization (for example, (`M', 0-17)). These reward matrices are separately computed for each of the 8 occupations. We then sample a random collection of 15 movies for each occupation to be considered as arms. Theorem \ref{thm: transfer of knowledge theorem} is then employed in order to form the confounded logs for each occupation class. The realized upper and lower bounds after this process are shown in Figure \ref{fig: instances_occ} in the appendix. In Figure \ref{fig: MAB transfer}, we provide the online learning behavior for two of these instances.

\begin{figure*}[h]
    \centering
    \begin{subfigure}[b]{0.5\textwidth}
        \centering
        \includegraphics[width = 0.95\textwidth,height = 1in]{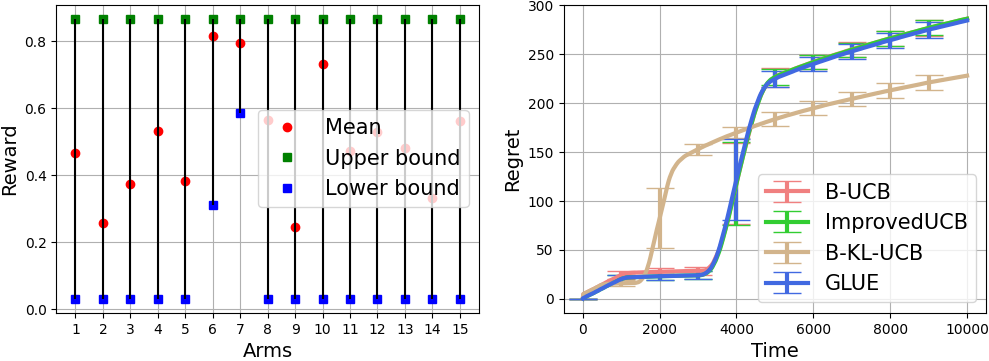}
        \caption{Visible Context: LAW}
        \label{fig: Law long}
    \end{subfigure}%
    \begin{subfigure}[b]{0.5\textwidth}
        \centering
        \includegraphics[width = 0.95\textwidth,height = 1in]{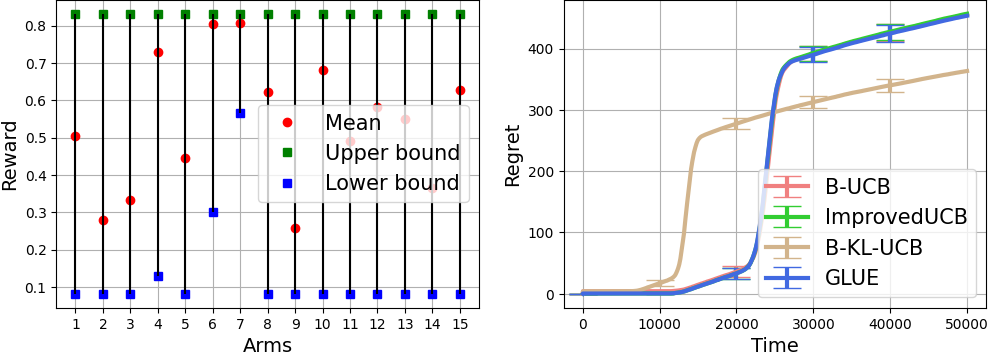}
        \caption{Visible Context: ACADEMIC}
        \label{fig: Academic long}
    \end{subfigure}%
    \caption{\small Online Learning Behavior of two instances from the experiments using the Movielens 1M dataset. The visible context is displayed in the captions. }
    \vspace{5pt}
    \label{fig: MAB transfer}
\end{figure*}
\begin{figure*}[h]
    \centering
    \begin{subfigure}[b]{0.5\textwidth}
        \centering
        \includegraphics[width = 0.95\textwidth,height = 1in]{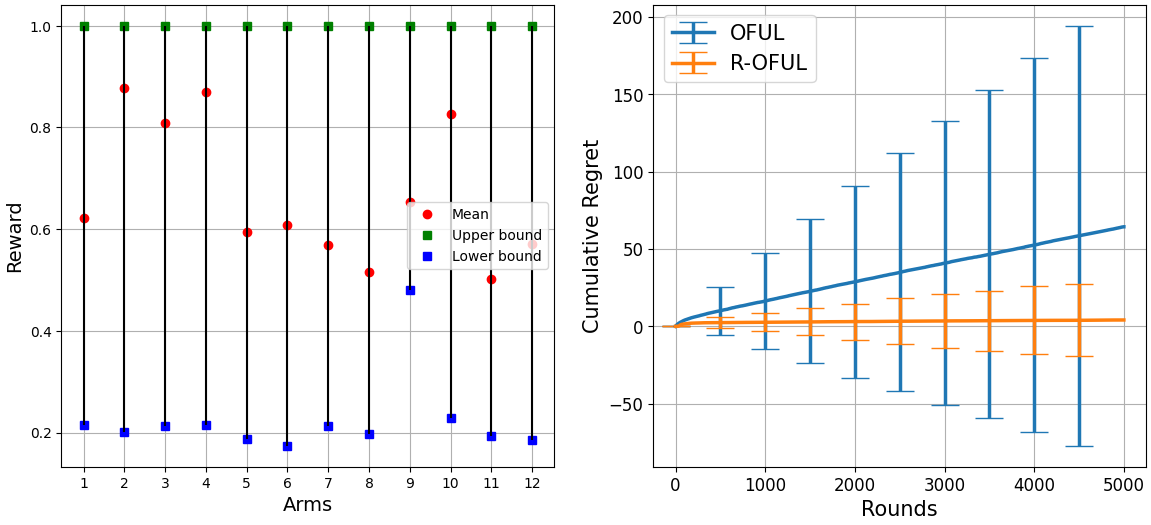}
        \caption{Setting 1}
        \label{fig: Setting1 linear}
    \end{subfigure}%
    \begin{subfigure}[b]{0.5\textwidth}
        \centering
        \includegraphics[width = 0.95\textwidth,height = 1in]{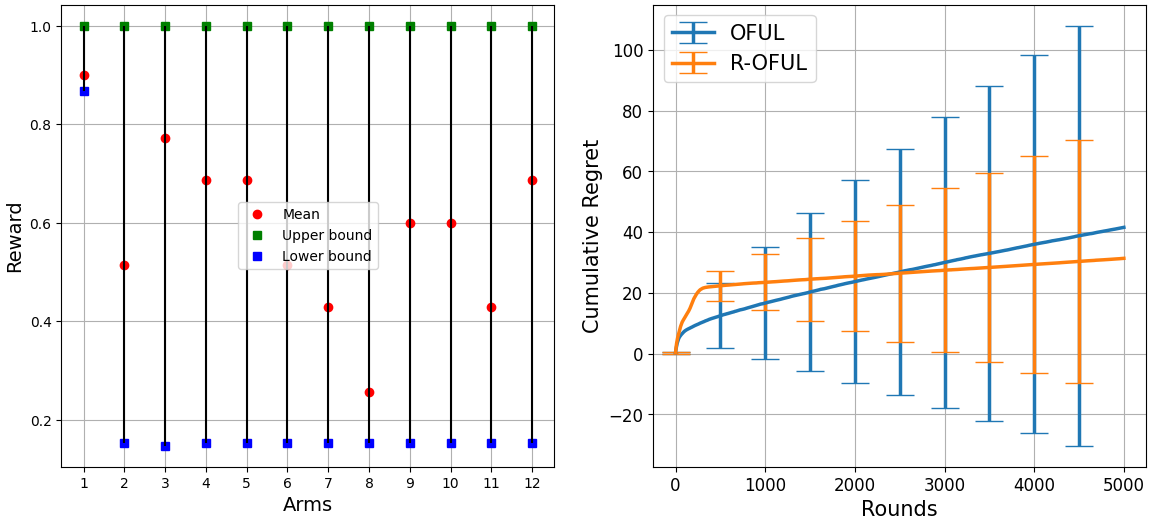}
        \caption{Setting 2}
        \label{fig: Setting2 linear}
    \end{subfigure}%
    \caption{\small Online Learning Behavior of two instances from the synthetic Linear Bandit setup. In each of the figures, the left figure summarizes the inferred bounds on $\innerprod{\theta^*_u}{a_k}$ for each $k\in[12]$. The right figure displays the online experiment. Results are averaged over 200 independent runs and one standard deviation error bars are displayed. In the first setting, the bounds do not provide any improved subgaussian estimates, however, the restriction helps improve regret. In the second setting, the bounds provide improved exploration rates. }
    \vspace{5pt}
    \label{fig: Linear transfer}
\end{figure*}

\noindent\textbf{Confounded Logs for Linear Bandits:} For this set of experiments, we use a synthetic setup: We first fix $\theta^*_z, \theta^*_u = (1,2,3,4,5)^T\in\mathbb{R}^5$. We normalize $\theta^*_z$ to have norm 1, and $\theta^*_u$ to have norm 0.9. For the arms, for each of the $12$ arms, we draw $a_{k,z}$ to from a Folded Normal Distribution ($|X|$ is folded normal if $X$ is normally distributed) in $\mathbb{R}^5$. We set $a_{1,u}$ to be in a ball of radius 0.1 around $\theta^*_u$ at random and for all other $k$, $a_{k,u}$ is set to be a normalized 2-sparse vector in $\mathbb{R}^5$. With these, we generate logs by uniformly sampling $T_u(t)$ in a ball of radius $0.1$ around $\theta^*_u$ independently for each $t$, then picking the best arm for this $T_u(t)$. Then we sample a noisy latent reward uniformly at random from a symmetric interval around $\innerprod{T_u(t)}{a_{\hat{k}(T_u(t)), u}}$, ensuring that it is in $[0,1]$. The reward at each time is given by the sum of this noisy latent reward and $\innerprod{\theta^*_z}{a_{\hat{k}(T_u(t)), u}}$. We collect logs of size $10^5$ to approximate the infinite logs.

We invoke Theorem \ref{thm: transfer of knowledge linear case} in order to infer upper and lower bounds on each of the 12 arms. Then, we spawn variants of the R-OFUL (Algorithm \ref{alg: R-OFUL} and vanilla OFUL from \cite{abbasi2011improved} and run an online linear bandit algorithm on our setting for 5000 iterations. The results are averaged over 200 independent runs. In Figure \ref{fig: Linear transfer}, we present out results. We see that when the bounds do no help in restricting the arms under consideration, R-OFUL matches vanilla OFUL (improvements here are from clipping the optimisitc indices for each arm). When non-trivial information can be gathered from the mean bounds, R-OFUL significatntly outperforms the vanilla variant.

\section{Conclusion}
In this work, we treated the problem of bandit learning with mean bounds in the linear and stochastic settings. Beginning with the study of how mean bounds lead to inference of sharp subgaussian factors when rewards are bounded, we present the R-OFUL algorithm for the linear setting and the GLUE algorithm for stochastic bandits that use these sharper factors to reduce exploration. In the linear case, by restricting the set of arms being considered at each round, we show that R-OFUL enjoys not only improved regret, but also increased compute efficiency in comparison to vanilla OFUL. In the stochastic setting, GLUE can offer comparable performance to the optimal B-KL-UCB algorithm when rich side information is available, and is never worse than the vanilla UCB policy. Further, we studied the practical use case of learning from (infinite) confounded logs where such mean bounds on rewards of actions can be inferred under mild assumptions on the latent environment. 

Several avenues of future work were discussed over the course of exposition, chief among which is the question of lower bounds for the linear setting. While we present a compute-efficient baseline for the case of linear bandits with mean bounds, drawing from intuition in standard linear bandits, the lower bound is likely achieved by a complex, non-optimistic algorithm that further leverages the side information. Studying the optimality would also help characterize the environments in which the mean bounds improve regret performance. Further, the case of learning from finitely many confounded observations, which manifests as a problem of learning from probabilistic mean bounds remains open. In Corollary 6.1, we provide a natural a regret upper bound to the performance of GLUE in this case (which can also be extended to R-OFUL for linear bandits) which serves as a baseline. Finally, our assumption of bounded rewards led to tighter subgaussian factors and regret improvements. Identifying other use cases and modes of side information with arbitrary, potentially unbounded, reward distributions from which such improved factors can be computed also serves an an interesting direction.

\subsection*{Acknowledgements}
We sincerely thank all the reviewers and editors for the feedback that helped improve the quality of the paper. This research was partially supported by NSF Grants 1826320, 2019844, 2107037 and 2112471, ARO grant W911NF-17-1-0359, US DOD grant H98230-18-D-0007, ONR Grant N00014-19-1-2566 and the US DoT supported D-STOP Tier 1 University Transportation Center.

\bibliographystyle{acm}
\bibliography{paper}
\onecolumn
\newpage
\appendix
\allowdisplaybreaks
\section{Extreme bounds imply tighter subgaussian bounds}\label{apx: improved sg bounds}

\begin{proof}[Proof of Lemma \ref{lem: f>1}]
    Since $\emx$ and $\emxx$ are both convex, so is $f(x)$ (since $m,1-m\geq0$). Thus, $f$ is minimized when
    \begin{align*}
        f'(x) = m(1-m)\left(\emxx -\emx\right) &= 0\\
        \iff \emxx &= \emx\\
        \iff (1-m)x &= -mx\\
        \iff x &= 0
    \end{align*}
    Therefore, the minimum value is $f(0) = 1$. 
    
    Now, we have
    \begin{align*}
        f_{1-m}(x) &= (1-m)e^{(1-(1-m))x} + (1-(1-m))e^{-(1-m)x} \\
        &= (1-m)e^{-m(-x)} + me^{(1-m)(-x)} \\
        &= f_m(-x).
    \end{align*}
    
    Finally, the limit at $x = 0$ can be computed by using L'Hopital's rule.
\end{proof}
	
We now prove each of the facts in
Lemma \ref{lem: composite lemma} separately. Together, these two facts and Lemma \ref{lem: f>1}, we have that there exist a $\sigma<0.5$ for all $m\neq 0.5$.
	
\subsection{Non-existence of Positive Maximizer for Large Means}
	
We begin with the case when $m>0.5$ and prove some useful Lemmas that will help establish our result.

\begin{lemma}\label{lem: A<B}
Let $A = mx + \frac{m(1-m)x^2}{2}$, $B = x + \log (m)$. Then, $A>B$ for all $m\in(0.5,1)$ and $x\in \mathbb{R}$
\end{lemma}
\begin{proof}
We have that  
\begin{align*}
    mx + \frac{m(1-m)x^2}{2} - (x+\log(m))&>0
    \iff  m(1-m)x^2 + 2(m-1)x - 2\log(m) >0
\end{align*}
Call $m(1-m)x^2+2(m-1)x-2\log(m) = h(x)$, differentiating and setting it to 0 gives $2m(1-m)x=2(1-m) =\implies x = \frac{1}{m}$. Thus, $\min_{x} h(x) = h\left(\frac{1}{m}\right) = \frac{m-1}{m}-2\log(m)$. Since $\log(1+y)< y$ for all $y>-1$, with $m>0.5$, we can write
\begin{align*}
    \frac{m-1}{m} - 2\log(m) &> \frac{m-1}{m} - 2(m-1) = \frac{(m-1)(1-2m)}{m} > 0.
\end{align*}
\end{proof}

\begin{lemma}\label{lem: increasing for large x}
With $A$ and $B$ as defined in Lemma \ref{lem: A<B}, the function $e^A - e^B$ is non-decreasing for $m\in(0.5,1)$ and $x\in\left[\frac{1}{m},\infty\right.\left.\right)$.
\end{lemma}
\begin{proof}
We have that 
\begin{align*}
    e^A - e^B \text{non-decreasing} &\iff \frac{d(e^A-e^B)}{dx}\geq0\\
    &\equiv \frac{dA}{dx}e^A - \frac{dB}{dx}e^B \geq0\\
    &\equiv \left(m + m(1-m)x\right) e^A - e^B \geq0.
\end{align*}
From Lemma \ref{lem: A<B},
\begin{align*}
    \left(m + m(1-m)x\right) e^A - e^B &> e^B\left(m + m(1-m)x - 1\right)\\
    \therefore e^A - e^B \text{non-decreasing} &\impliedby e^B\left(m + m(1-m)x - 1\right) \geq 0\\
    &\equiv m(1-m)x - (1-m) \geq 0 \equiv x\geq \frac{1}{m}.
\end{align*}
\end{proof}

\begin{lemma}\label{lem: increasing for small x}
With $A$ and $B$ as defined in Lemma \ref{lem: A<B}, the function $e^A-e^B$ is increasing for $m\in(0.5,1)$ and $x\in \left(0,\frac{1}{m}\right)$.
\end{lemma}
\begin{proof}
We begin with 
\begin{align*}
    e^A - e^B \text{increasing} \iff \frac{d(e^A-e^B)}{dx}&>0\\
    \equiv \frac{dA}{dx}e^A - \frac{dB}{dx}e^B &>0\\
    \equiv \left(m + m(1-m)x\right) e^A - e^B &>0\\
    \equiv e^B\left(\left(m+m(1-m)x\right)e^{A-B} - 1\right)&>0\\
    \equiv \left(m+m(1-m)x\right)e^{A-B} &>1\\
    \equiv e^{B-A}&< \left(m+m(1-m)x\right)\\
\end{align*}

Note that at $x=0$, both sides of this equation compute to $m$. Thus, the inequality holds if $\frac{d(e^{B-A})}{dx}<\frac{d(m+m(1-m)x)}{dx}$ for $x>0$.
We have that $\frac{d(e^{B-A})}{dx} = e^{B-A}\cdot\frac{d(B-A)}{dx} = (1-m)(1-mx)e^{B-A}$ and $\frac{d(m+m(1-m)x)}{dx} = m(1-m)$. Thus, we require that $(1-mx)e^{B-A}<m$. Let $k(x) = (1-mx)e^{B-A}$. Then, we have that 
\begin{align*}
    k'(x) &= (1-mx)\frac{d\left(e^{B-A}\right)}{dx} -me^{B-A} = e^{B-A}\left((1-m)(1-mx)^2 - m\right).
\end{align*}
Note that $(1-mx)^2$ decreases from 1 to 0 in $(0,\frac{1}{m})$. Therefore, $(1-mx)^2<\max_{x\in(0,\frac{1}{m})} (1-mx)^2 = 1 < \frac{m}{1-m}$ for $m>0.5$. Therefore, $(1-m)(1-mx)^2-m<0$ in $(0,\frac{1}{m})$. Since $e^{B-A}>0$, $k(x)$ is also decreasing in this range.

Thus, we have that for $m\in (0,\frac{1}{m})$,
\begin{align*}
    (1-mx)e^{B-A} < \max_{x\in (0,\frac{1}{m})} (1-mx)e^{B-A} = 1-m.
\end{align*}
\end{proof}

\begin{lemma}\label{lem: lnf bounding lemma}
For all $m\in(0.5,1)$ and $x>0$, we have that $\lnf < \frac{m(1-m)x^2}{2}$.
\end{lemma}
\begin{proof}
With $A,B$ as defined in Lemma \ref{lem: A<B}, we require that 
\begin{align*}
    \lnf &< \frac{m(1-m)x^2}{2}\\
    \equiv me^x + 1 -m &< \exp\left( mx + \frac{m(1-m)x^2}{2}\right) = e^A\\
    \equiv 1-m &< e^A - me^x = e^A - e^{x+\log(m)} = e^A-e^B.
\end{align*}

From Lemmas \ref{lem: increasing for large x} and \ref{lem: increasing for small x}, we have that $e^A - e^B$ is non-decreasing in $(0,\infty)$. Thus, $e^A-e^B > \min_{x\in(0,\infty)} e^A - e^B = 1-m$, giving us the result.
\end{proof}

\begin{lemma}[Part 1 of Lemma \ref{lem: composite lemma}]\label{thm: positive zero nonexistence for m>0.5}
    When $m\in(0.5,1)$, the function $\frac{2}{x^2}\lnf$ is not maximized at any $x>0$.
\end{lemma}	
\begin{proof}
Using \ref{lem: lnf bounding lemma}, we have for any $m\in(0.5,1),x>0$,
\begin{align*}
    \frac{2}{x^2}\lnf < \frac{2}{x^2}\cdot\frac{m(1-m)x^2}{2} = m(1-m).
\end{align*}

However, Lemma \ref{lem: f>1} shows that $\lim_{x\rightarrow0}\frac{2}{x^2}\lnf = m(1-m)$. Therefore, the function is not maximized when $x>0$.
\end{proof}

\subsection{Existence of Positive Maximizer for Small Means}
	
We now move on to the case when $m<0.5$. As before, we will establish the result using some useful lemmas.
	
\begin{lemma}\label{lem: AB properties}
Let $A = mx + \frac{m(1-m)x^2}{2}$ and $B = x + \log(m)$. Then, for $m\in(0,0.5)$, we have that the function $e^A - e^B$ is 
\begin{itemize}
    \item[a)] decreasing for $x\in (0,x_1)$ where $x_1 = \frac{1}{m} - \frac{1}{\sqrt{m(1-m)}}$.
    \item[b)] increasing for $x>\frac{2}{m}$.
\end{itemize}
\end{lemma}

\begin{proof}
\textbf{Part a:}\\
 We require that in the given range,
\begin{align*}
    \frac{dA}{dx}e^A - \frac{dB}{dx}e^B &<0\\
    \left(m+m(1-m)x\right)&<e^{B-A}.
\end{align*}
Both sides of the above inequality compute to $m$ at $x=0$. Therefore, the inequality holds for $x\in(0,x_1)$ if
\begin{align*}
\frac{d(m+m(1-m)x)}{dx} = m(1-m) &< (1-m)(1-mx)e^{B-A} = \frac{d(e^{B-A})}{dx}\\
\text{That is, } m &< (1-mx)e^{B-A}.
\end{align*} 

Let $k(x) = (1-mx)e^{B-A}$. Then, we have $k'(x) = e^{B-A}\left( (1-m)(1-mx)^2-m\right) >0 \iff \left( (1-m)(1-mx)^2-m\right) >0$. Let $p(x) = (1-m)(1-mx)^2-m$. Then, we have that $p(x) = 0$ at $x = \frac{1}{m} \pm \frac{1}{\sqrt{m(1-m)}}$. Therefore, $p(x) > 0$ for $x<x_1$.

Thus, $k(x)$ is increasing in $(0,x_1)$ and hence $k(x)>k(0) = m$. This implies that  $\left(m+m(1-m)x\right)<e^{B-A}$ in $(0,x_1)$ which leads to the result.

\textbf{Part b.}\\
We first note that at $x = \frac{2}{m}$, $A = \frac{2}{m}> B = \frac{2}{m} + \log(m)$ since $m<1$. Further, $\frac{dA}{dx}  = m+m(m-1)x > 1 = \frac{dB}{dx}$ for all $x>\frac{1}{m}$. Thus, $A>B$ for $x>\frac{2}{m}$.

Thus, we get that 
\begin{align*}
\frac{dA}{dx}e^A - \frac{dB}{dx}e^B = \left( m+m(1-m)x\right)e^A - e^B > e^B\left( m+m(1-m)x-1\right) >0 ~~~~~~~\text{ for $x>\frac{1}{m}$}
\end{align*}
And thus, $e^A - e^B$ is increasing in this range.
\end{proof}
	
\begin{lemma}[Part 2 in lemma \ref{lem: composite lemma}]\label{lem: positive zero for m<0.5}
For all $m\in (0,0.5), x>0$, we have that with $x_1$ as defined in Lemma \ref{lem: AB properties}
\begin{itemize}
    \item[a)] For all $x\in(0,x_1)$, $\frac{2}{x^2}\lnf > m(1-m)$,
    \item[b)] For all $x> \frac{2}{m}$, $\frac{2}{x^2}\lnf < m(1-m)$.
\end{itemize}
\end{lemma}

\begin{proof}
\textbf{Part a.}\\
For this, we require that in $(0,x_1)$,
\begin{align*}
    \lnf &> \frac{m(1-m)x^2}{2}\\
    \iff \emx \left( me^x + 1-m\right) &>\exp\left(\frac{m(1-m)x^2}{2} \right)\\
    \iff 1-m &> e^A - e^B
\end{align*}
with $A,B$ as in Lemma \ref{lem: AB properties}. Using part a of this Lemma, the inequality holds since $e^A-e^B$ is decreasing in $(0,x_1)$ and is maximized at $x=0$ with value $1-m$.

\textbf{Part b.}\\
To see this, we require that 
\begin{align*}
    \lnf < \frac{m(1-m)x^2}{2} \iff 1-m < e^A-e^B
\end{align*}
This holds since $e^A - e^B$ is increasing when $x>\frac{2}{m}$ (Lemma \ref{lem: AB properties} part b) and is hence minimized at $x = \frac{2}{m}$ with value $(1-m)e^{2/m}>(1-m)$ since $m<1$. 
\end{proof}

\subsection{Putting it all together}
	
The following lemma will help us prove our final result.
	
\begin{lemma}\label{lem: e^C-e^D increasing lemma}
Define $C = mx + \frac{x^2}{8}$ and $D = x + \log(m)$. Then, $e^C -e^D$ is increasing in $(0,\infty)$ for any $m\in (0,0.5)$.
\end{lemma}

\begin{proof}
This hold if and only if $\frac{dC}{dx}e^C-\frac{dD}{dx}e^D>0$ for all $x>0$. That is, 
\begin{align*}
    \left(m + \frac{x}{4}\right)e^C - e^D &>0 \iff m+\frac{x}{4} > exp(D-C).
\end{align*} 

For $x=0$, both sides of this inequality compute to $m$. Thus, the inequality holds if 
\begin{align*}
\frac{d}{dx}\left(m+\frac{x}{4}\right) &> \frac{d(D-C)}{dx}\exp(D-C) \equiv \frac{1}{4} > \left(1-m-\frac{x}{4}\right)\exp(D-C).
\end{align*}

Since $\exp(D-C)>0$ for all $x$, the inequality is trivially true for $x>(1-m)$. To see that this is true for $x\in(0,4(1-m))$, define $k(x) = \left( 1-m -\frac{x}{4}\right)\exp(D-C)$. Since $k'(x) = \left( \left(1-m-\frac{x}{4}\right)^2-\frac{1}{4}\right)\exp(D-C)$, we have that 
\begin{align*}
k'(x) = 0 \iff \left( \left(1-m-\frac{x}{4}\right)^2-\frac{1}{4}\right)=0 \iff x = 4(1-m)\pm 2.
\end{align*}
Therefore, in $(0,4(1-m))$, $k(x) = 0 \iff x = 4(1-m)-2 = 2(1-2m)$. Further, since $1-m-\frac{x}{4}>0$ for $x<4(1-m)$, $x=2(1-2m)$ must be a maximizer of $k(x)$. We are now left to prove that $k(2(1-2m))<\frac{1}{4}$ for any $m<\frac{1}{2}$.  For this we have 
\begin{align*}
k(2(1-2m))&= \left( 1 - m - \frac{1-2m}{2}\right)\exp\left( 2(1-2m) +\log(m)-2m(1-2m)-\frac{(1-2m)^2}{2}\right)\\
&= \frac{1}{2}\exp\left(\log(m) + (1-2m)\left(2-2m-\frac{1-2m}{2}\right)\right)\\
&= \frac{m}{2}\exp\left(\frac{(1-2m)(3-2m)}{2}\right).
\end{align*}

Let $q(x) = \frac{x}{2}\exp\left(\frac{(1-2m)(3-2m)}{2}\right)$. We have that 

\begin{align*}
q'(x) &= \exp\left(\frac{(1-2m)(3-2m)}{2}\right)\left(\frac{1}{2}+\frac{x}{4}\left(8x-8\right)\right)\\
&= \exp\left(\frac{(1-2m)(3-2m)}{2}\right)\left(\frac{4x^2-4x+1}{2}\right)\\
&= \frac{(1-2x)^2}{2}\exp\left(\frac{(1-2m)(3-2m)}{2}\right)\\
&>0~~~\forall x
\end{align*}
Thus, $q(x)$ is increasing for all $x$. Specifically, we have that for any $m\in(0,\frac{1}{2})$, $k(2(1-2m)) = q(m) < q(\frac{1}{2}) = \frac{1}{4}$.

Thus, $k(2(1-2m))<\frac{1}{4}$ for any $m\in(0,\frac{1}{2})$ and $x\in (0,4(1-m))$. Therefore, $\frac{1}{4} > \left(1-m-\frac{x}{4}\right)\exp(D-C)$ and thus, $e^C-e^D$ is increasing for all $x>0$.
\end{proof}

We are now ready to present our final result.

\begin{proof}[Proof of Theorem \ref{thm: improved subgaussian factors}]
\textbf{Subgaussianity}

Let $g_m(x) = \frac{2}{x^2}\lnf$. For $m<0.5$, part b of Lemma \ref{lem: f>1} and Theorem \ref{thm: positive zero nonexistence for m>0.5} imply that the maximum of $g_m(x)$ is not achieved at any negative $x$. Further, Theorem \ref{lem: positive zero for m<0.5} implies that there exists $x>0: g_m(x)>m(1-m)$. Additionally, since $g_m(x)<m(1-m)$ for $x>\frac{2}{m}$, we have that the function of interest is maximized at some $0<x<\frac{2}{m}$. 

Analogously, for $m>0.5$, this function is maximized at some $\frac{-2}{m}< x < 0$. 

Thus, for any random variable $X\in[0,1]$ with mean $m\in(0,1), m\neq 0.5$, we have $\expec[\exp\left(s(X - m) \right)]\leq \exp\left(\frac{s^2\sigma^2}{2}\right)$. Thus, $X$ is $\sigma$-subgaussian.

\textbf{Bounds on $\sigma$}

From Part 2 of Lemma \ref{lem: composite lemma} (Lemma \ref{lem: positive zero for m<0.5}), we have that 
\begin{align*}
\sigma^2 \geq \lim_{x\rightarrow 0} \frac{2}{x^2} \log(f_m(x)) = m(1-m). 
\end{align*}
  
Thus, we are left with showing that $\sigma<\frac{1}{2}$ when $m<0.5$. Exploiting part b of Lemma \ref{lem: f>1} implies the result for $m<0.5$.

We require that $\max_{x\in\mathbb{R}}\frac{2}{x^2}\lnf < \frac{1}{4}$. However, since we have already shown that this maximum is achieved at a positive $x$, it is sufficient to show $\max_{x>0}\frac{2}{x^2}\lnf<\frac{1}{4}$. That is, for all $x>0$,
\begin{align*}
    \frac{2}{x^2}\lnf &< \frac{1}{4}\\
    \equiv \lnf &< \frac{x^2}{8}\\
    \equiv me^x+1-m &< \exp\left(mx + \frac{x^2}{8}\right)\\
    \equiv 1-m &< \exp\left( mx + \frac{x^2}{8}\right) - \exp(x + \log(m))
\end{align*}

From Lemma \ref{lem: e^C-e^D increasing lemma}, we have that the RHS is increasing in $(0,\infty)$ and is thus minimized when $x=0$ with minimum value $\frac{1}{4}$. This gives us the result.
\end{proof}

 \section{Linear Bandits with Mean Bounds}\label{apx: linear bandits}
 Now we concentrate on the setting of Section \ref{sec: Linear bandits} and provide proofs for the result of Theorem \ref{thm: linear bandit result}. First, we will show that the tight bounds we propose in Equation \ref{eq: tighter bounds} are valid and tight. Then, we prove that the best instantaneous arm at each time is always contained in the restricted set at each time. Next, we show that the quantity $\sigma_t$ is a valid upper bound on the subgaussian factor for all arms in the restricted set. This culminates in our final regret result. We begin with the bounds.

 \begin{claim}
     The solutions to the optimization problems in Equation \ref{eq: tighter bounds} is tight, that is, $\not\exists \theta'\in\mathcal{C}_b$ such that for any $a\in\A$, $\innerprod{a}{\theta'}<l_a$ or $\innerprod{a}{\theta'}>u_a$
 \end{claim}
 \begin{proof}
    First we note that the constraint set $\mathcal{C}_b$ is non-empty since $\theta^*\in\mathcal{C}_b$ and thus the quantities $l_a,u_a$ are well-defined for each $a\in\A$. Suppose $\exists \theta'\in\mathcal{C}_b$ with $\innerprod{a}{\theta'}<l_a$ for some action $a$. This implies that $l_a \neq \min_{\theta\in\mathcal{C}_b} \innerprod{a}{\theta}$ which is a contradiction. A similar argument for the upper bound completes the proof.
 \end{proof}

 Note that at this point, $\mathcal{C}_b$ is redefined to be the set $\{ \theta: \|\theta\|\in[m,M], \forall a \in\A, \innerprod{a}{\theta}\in[l_a,u_a]\}.$ We now show that the best arm is always in contention at any time.

 \begin{lemma}
     Let $a^*_t = \argmax_{a\in\A_t} \innerprod{a}{\theta^*}$ be the instantaneous best arm. Then, $a^*_t\in\A_r(t)$.
 \end{lemma}
 \begin{proof}
     First, we show that $a^*_t\in\A_p(t)$. Suppose that $u_{a^*_t}<l_{max}(t)\implies \innerprod{a}{\theta^*} \leq u_{a^*_t} < l_{max}(t) \leq \innerprod{\hat{a}_t}{\theta^*}$, that is, $\innerprod{a^*_t}{\theta^*}< \innerprod{\hat{a}_t}{\theta^*}$. This is a contradiction to the definition of $a^*_t$ and thus, $a^*_t\in\A_p(t)$.

     To show that $a^*_t\in \A_{prune}(t)$, we first observe that 
     \begin{align*}
         l_{max}(t) &\leq \innerprod{\hat{a}_t}{\theta^*}\\
         &= \|\hat{a}\| \|\theta^*\|\cos(\ang{\hat{a}_t}{\theta^*})\\
         &\leq M\cos(\ang{\hat{a}_t}{\theta^*}) &\because \|\theta^*\|\leq M, \|a\|=1\\
         \implies \ang{\hat{a}_t}{\theta^*} &\leq \cos^{-1}\left( \frac{l_{max}(t)}{M}\right) = \alpha_t.
     \end{align*}
     Further, since $l_{max}(t)\leq \innerprod{a^*_t}{\theta^*},$ we get that $\ang{a^*_t}{\theta^*}\leq \alpha_t$. Combining these two, we get that $\ang{\hat{a}_t}{a^*_t}\leq 2\alpha_t$ and thus, $a^*_t\in\A_r(t).$
 \end{proof}

The following lemma proves that the quantity $\sigma_t$ is a valid upper bound to the s.g-factor of any arm in the restricted set.

\begin{lemma}\label{lem: sigmat subgaussian}
    For any arm $a\in\A_r(t)$, $\sigma(a)\leq \sigma_t$.
\end{lemma}
\begin{proof}
    From Corollary \ref{cor: sg factors with bounds}, we have that $\sigma(a)\leq \psi(l_a,u_a)$ for any arm $a\in\A_r(t)$. Therefore, it follows that $\sigma(a)\leq \max_{a'\in\A_r(t)}\psi(l_a',u_a')$. 
    
    We are only left to prove that $\sigma(a)\leq m\psi(3\alpha_t)$. For this, we note that the arm in $\A_r(t)$ with lowest mean reward is at most $3\alpha_t$ away from $\theta^*$. This happens when $\theta^*$ is at an angle $\alpha_t$ away from $\hat{a}_t$ and there exists some arm $a_{bad}$ at an angle $\alpha_t$ on the opposite side of $\hat{a}_t$. For this fictitious arm $a_{bad}$, we have 
    $\innerprod{a_{bad}}{\theta^*} = \|a_{bad}\|\|\theta^*\|\cos(3\alpha_t)\geq m\cos(3\alpha_t)$. Now, there are two cases:\\
    1. If $m\cos(3\alpha_t)< 0.2178a + 0.7822b$: Then, $\psi(m\cos(3\alpha_t),b) = \frac{(b-a)^2}{4}$ and $\forall a\in\A_r(t), \sigma(a)\leq \psi(m\cos(3\alpha_t),b)$ follows by definition.\\
    2. If $m\cos(3\alpha_t)\geq 0.2178a + 0.7822b$: Then any arm $a\in\A_r(t)$, $\innerprod{a}{\theta^*}\geq \innerprod{a_{bad}}{\theta^*}\geq m\cos(3\alpha_t)$. Using this as a lower bound for the arm $a$, we can write $\sigma(a) \leq \psi(m\cos(3\alpha_t),u_a) = \psi(m\cos(3\alpha_t),b)$.\\ 
This completes the proof.
\end{proof}

We now prove our final regret result.

\begin{proof}[Proof of Theorem \ref{thm: linear bandit result}]
    Let $S_t' = \sum_{n=1}^t \frac{\eta_n a_n}{\sigma_n}$, where the random variable $\eta_t$ is conditionally $\sigma_{a_t}$-subgaussian. From Lemma \ref{lem: sigmat subgaussian}, $\sigma_{a_t}\leq \sigma_t$ and thus, $\eta_t$ is also conditionally $\sigma_t$-subgaussian.
    
    We claim that $M_t(x) = \exp\left(\innerprod{x}{S_t'} - \frac{\left\|x\right\|_{V_t}^2}{2}\right)$ with $M_0(x) := 1$ is a supermartingale with respect to the filtration $\mathcal{F}_{t-1} = \sigma(a_1,Y_1,..., a_{t-1},Y_{t-1},a_t)$. To see this, observe that
    \begin{align*}
        M_t(x) &= \exp\left( \sum_{n=1}^t \left(\frac{\eta_n\innerprod{x}{a_n}}{\sigma_n} + \frac{\left\|x\right\|_{a_n a_n^T}}{2}\right)\right)\\
        \implies \expec\left[M_{t-1}|\mathcal{F}_{t-1}\right] &= M_{t-1}(x)\cdot\expec\left[ \exp\left( \frac{\eta_n\innerprod{x}{a_t}}{\sigma_t} + \frac{\left\|x\right\|_{a_t a_t^T}}{2}\right)\right]\\
        &\leq M_{t-1}(x) &\because \eta_t \text{ is conditionally $\sigma_t$-s.g}.
    \end{align*}

    Now let $H = \lambda I_d$ and $h\sim\mathcal{N}(0,H^{-1})$. With some linear algebra, we can write
    \begin{align*}
        \overline{M}_t &:= \int_{\mathbb{R}^d} M_t(x)dh(x) = \sqrt{\frac{\lambda^d}{\det\overline{V}_t}}\exp\left(\frac{\left\|S_t'\right\|_{\overline{V}_t^{-1}}}{2}\right)
    \end{align*}
    Since $M_t(x)$ is a supermartingale, so is $\overline{M}_t$. Therefore, using the Maximal inequality for non-negative supermartingales, we get that 
    \begin{align*}\label{eq: S eq}
       \delta &\geq \prob\left( \exists t: \left\|S'_t\right\|_{\overline{V}_t^{-1}}^2\geq 2\log\left(\frac{1}{\delta}\right) + \log\left(\frac{\det \overline{V}_t}{\lambda^d}\right)\right) \\
       &\geq \prob\left(\exists t: \left\|\sum_{n=1}^t \eta_n a_n\right\|_{\overline{V}_t^{-1}}^2\geq \gamma^2\left(2\log\left(\frac{1}{\delta}\right) + \log\left(\frac{\det \overline{V}_t}{\lambda^d}\right)\right)\right)\numberthis
    \end{align*}
The last step uses the fact that $\sum_{n=1}^t \frac{\eta_n a_n}{\sigma_t} \geq \frac{1}{\max_{n\leq t}\sigma_n}\sum_{n=1}^t \eta_n a_n = \frac{1}{\gamma}\sum_{n=1}^t \eta_n a_n$. Now, with $V_t = \sum_{n=1}^t a_n a_n^T$, we have

\begin{align*}\label{eq: theta ineq}
    \left\|\hat\theta_t - \theta^*\right\|_{\overline{V}_t} &= \left\| \overline{V}_t^{-1}\sum_{n=1}^t \eta_n a_n + \left(\overline{V}_t^{-1}V_t - I_d\right) \theta^* \right\|_{\overline{V}_t}\\
    &\leq \left\| \sum_{n=1}^t \eta_n a_n \right\|_{\overline{V}_t^{-1}} + \sqrt{\lambda {\theta^*}^T\left( I - \overline{V}_t^{-1} V_t\right)\theta^*}\\
    &\leq  \left\| \sum_{n=1}^t \eta_n a_n \right\|_{\overline{V}_t^{-1}} + \sqrt{\lambda}\|\theta^*\| \leq  \left\| \sum_{n=1}^t \eta_n a_n \right\|_{\overline{V}_t^{-1}} +\sqrt{\lambda}M.\numberthis
\end{align*}

Combining Equations \ref{eq: S eq} and \ref{eq: theta ineq}, we get that with probability at least $1-\delta$, 
\begin{align*}
  \|\hat\theta_t - \theta^*\|_{\overline{V}_t^{-1}} &\leq \sqrt{\lambda} M + \gamma \sqrt{2\log\left(\frac{1}{\delta}\right) + \log\left(\frac{\det \overline{V}_t}{\lambda^d}\right)}\\
  &\leq  \beta_t(\delta) ~~~~~~~~~~~~\because \frac{\det \overline{V}_t}{\lambda^d}\leq \left(\text{trace}\left(\frac{\overline{V}_t}{\lambda d}\right)\right)^d\leq \left( 1 + \frac{t}{\lambda d}\right)^d
\end{align*}
Now, with $\mathcal{E}_t = \left\{\theta: \|\hat\theta_t - \theta\|_{\overline{V}_t^{-1}} \leq \beta_t(\delta) \right\}$ observe that $\mathcal{C}_t = \mathcal{C}_b \cap \mathcal{E}_t$. Thus,
\begin{align*}
    \prob(\exists t: \theta^*\not\in \mathcal{C}_t)\leq \prob(\exists t: \theta^*\not\in \mathcal{E}_t) \leq \delta.
\end{align*}

The result of the theorem follows by using the standard arguments of Theorem 3 in \cite{abbasi2011improved}.
\end{proof}

\section{GLUE for stochastic Multi-Armed Bandits with Mean Bounds}\label{apx: GLUE section}

We begin with a few useful lemmas.

\begin{lemma}\label{lem: arm subgaussian factors}
    Any arm $k$ is $\sigma_k-$subgaussian. The best arm is always $\psi_k$-subgaussian.
\end{lemma}
\begin{proof}
Both these are an application of Corollary \ref{cor: sg factors with bounds}. The first one follows using bounds $l_k,u_k$ for the suboptimal arm $k$. For the second, observe that $\mu^* \in [l_{max},u_{k^*}]$, which are used to form $\psi_k$.
\end{proof}
The next lemma proves that meta-pruned arms are only played a constant number of times asymptotically.
\begin{lemma}\label{lemma:meta-pruning}
    Let arm $k\in K_1(\delta) := \{k\in [K]: \mu^*\geq u_k + \delta, k\neq 1 \}$. Then, for all $n\geq 1$,
    \begin{align*}
        \expec[T_k(n)]\leq \frac{5\psi_1^2}{(\max\{\delta, \mu^*-u_k\})^2}.
    \end{align*}
\end{lemma}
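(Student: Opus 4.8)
\textbf{Proof plan for Lemma \ref{lemma:meta-pruning} (meta-pruned arms are played $O(1)$ times).}

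The plan is to control $\expec[T_k(n)]$ for a meta-pruned arm $k\in K_1(\delta)$ by a standard ``bad event'' decomposition, but using the non-optimistic index structure of GLUE together with the subgaussianity of \emph{the optimal arm} established in Lemma \ref{lemma:subgaussian}, Part 2. Write $\Delta := \max\{\delta,\mu^*-u_k\}$, so that $u_k \le \mu^* - \Delta$. The key observation is that since the index $U_k(t)$ is always clipped at $u_k$, we have $U_k(t) \le u_k \le \mu^*-\Delta$ for all $t$. Hence, for arm $k$ to be played at time $t+1$, it must be that $U_1(t) \le U_k(t) \le \mu^* - \Delta$, i.e. the index of the optimal arm has dropped below $\mu^*-\Delta$. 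Crucially, this reduces the problem to bounding how often the optimal arm's index underestimates its mean by more than $\Delta$ — and this does \emph{not} require any control of arm $k$'s own empirical mean, which is exactly why meta-pruned arms cost only constant regret.

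The steps, in order: (i) Bound $\expec[T_k(n)] \le \sum_{t=0}^{n-1}\prob(A_{t+1}=k) \le \sum_{t=0}^{n-1}\prob(U_1(t) \le \mu^*-\Delta)$, using the clipping $U_k(t)\le u_k\le\mu^*-\Delta$. (ii) On the event $\{U_1(t)\le \mu^*-\Delta\}$, since $U_1(t) = \min\{u_1, \hat\mu_1(t) + \sqrt{2c_1\log(f(t+1))/T_1(t)}\}$ and $u_1 \ge \mu^* > \mu^*-\Delta$, the minimum must be achieved by the second term, so $\hat\mu_1(t) + \sqrt{2c_1\log(f(t+1))/T_1(t)} \le \mu^*-\Delta$, which forces $\hat\mu_1(t) - \mu_1 \le -\Delta$ (the confidence-width term is nonnegative). (iii) Peel over the value of $T_1(t)=s$: $\prob(U_1(t)\le\mu^*-\Delta) \le \sum_{s=1}^{t}\prob(\text{$s$-sample mean of arm 1 deviates below by } \Delta + \sqrt{2c_1\log(f(t+1))/s})$. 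Apply the $\sqrt{c_1}$-subgaussian tail bound for arm 1 to each term; the $\sqrt{2c_1\log(f(t+1))/s}$ offset produces a factor $1/f(t+1) = 1/(1+(t+1)\log^2(t+1))$ after optimizing, and the remaining $\Delta$ offset contributes $\exp(-s\Delta^2/(2c_1))$, whose sum over $s\ge 1$ is at most $2c_1/\Delta^2$ (a geometric-type bound, using $e^{-x}\le 1/(1+x)$ or just $\sum e^{-s\gamma}\le 1/\gamma$ for small $\gamma$ and a crude bound otherwise). (iv) Sum over $t$: $\sum_{t\ge0} \frac{c}{f(t+1)} \cdot \frac{2c_1}{\Delta^2}$ converges because $\sum_t 1/(t\log^2 t) < \infty$, and tune the constants so the total is at most $5c_1/\Delta^2$.

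The main obstacle is step (iii)–(iv): getting the constant to come out as exactly $5$ requires carefully handling the cross term between the $\Delta$ deviation and the $\log(f(t+1))$-dependent confidence width. The cleanest route is probably the ``peeling'' technique of Chapter 8 of \cite{lattimore2018bandit}: split the sum over $s$ at the threshold where the confidence width becomes smaller than $\Delta$ (roughly $s \approx 2c_1\log(f(t+1))/\Delta^2$), bound the large-$s$ part by $\sum_s \exp(-s\Delta^2/(2c_1)) \le 1 + 2c_1/\Delta^2 \cdot (\text{something})$ and the small-$s$ part using the $1/f(t+1)$ decay times the number of terms $2c_1\log(f(t+1))/\Delta^2$, noting $\log(f(t+1))/f(t+1)$ is summable. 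Both pieces are $O(c_1/\Delta^2)$ with a computable constant; verifying it is $\le 5$ (rather than just $O(1)$) is the only genuinely delicate bookkeeping, and it parallels the corresponding constant in the standard UCB analysis.
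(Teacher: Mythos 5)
Your proposal is correct and follows essentially the same route as the paper's proof: clipping gives $\{A_t=k\}\subseteq\{U_1(t-1)\le u_k\}$, a union bound over $T_1=s$ together with the $\sqrt{c_1}$-subgaussianity of arm 1 (Lemma \ref{lemma:subgaussian}, Part 2) yields the factor $\tfrac{1}{f(t)}e^{-s(\mu^*-u_k)^2/(2c_1)}$, and summing the geometric series over $s$ and $\sum_t 1/f(t)$ over $t$ gives the constant. The only simplification you are missing is that no careful peeling split is needed for the cross term: since $\bigl(\Delta+\sqrt{2c_1\log f(t)/s}\bigr)^2\ge \Delta^2+2c_1\log f(t)/s$, the cross term can simply be dropped, and the constant $5$ then follows from $\sum_{s\ge1}e^{-s\Delta^2/(2c_1)}\le 2c_1/\Delta^2$ combined with $\sum_{t\ge1}1/f(t)\le 5/2$.
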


\begin{proof}
    Since $k\in K_1(\delta)\implies\Delta_k>0$, we have that $\{A_t = k \}\subseteq \{ U_1(t) \leq u_k \}$. Thus,
    \begin{align*}
        \expec[T_k(n)] &= \expec\left[ \sum_{t=1}^n \ind\{A_t = k\} \right]\\
        &\leq \expec\left[ \sum_{t=1}^n \ind\{U_1(t-1)\leq u_k \}\right]\\
        &\leq \expec\left[ \sum_{t=1}^n \ind\left\{\hat\mu_1(t) -\mu^* \leq u_k - \mu^* - \sqrt{\frac{2\psi_1^2 \log(f(t))}{T_1(t-1)}}\right\}\right]\\
        &\leq \sum_{t=1}^n \sum_{r=1}^n \exp\left(-\frac{r}{2\psi_1^2}\left(\sqrt{\frac{2\psi_1^2 \log(f(t))}{r}} + \mu^* - u_k\right)^2 \right) \\
        &~~~~~~~~~~~~~~~(\text{Using Union Bound, Lemma \ref{lem: arm subgaussian factors}})\\
        &\leq \sum_{t=1}^n \frac{1}{f(t)} \sum_{r=1}^n \exp\left(-\frac{r(\mu^*-u_k)^2}{2\psi_1^2} \right)\\
        &\leq \frac{2\psi_1^2}{(\mu^*-u_k)^2}\sum_{t=1}^n \frac{1}{f(t)}\\
        &\leq \frac{5\psi_1^2}{(\mu^*-u_k)^2}\\
        &\leq \frac{5\psi_1^2}{(\max\{\delta, \mu^*-u_k\})^2}
    \end{align*}
    where the last inequality follows from the assumption on $u_k$. 
\end{proof}

The next lemma is a restatement of Lemma 8.2 in \cite{lattimore2018bandit} for standard subgaussian variables. It will be used to bound the number of plays of a suboptimal arm that is not meta-pruned.
\begin{lemma}[Lemma 8.2 in \cite{lattimore2018bandit}]\label{lemma:lattimore}
    Let $\{X_i\}$ be a sequence of zero mean, independent $\sigma$-subgaussian random variables. Let $\hat\mu_t = \frac{1}{t}\sum_{r = 1}^t X_r, \delta>0, a>0, u = 2a\delta^{-2}$ and 
    \begin{align*}
        \kappa &= \sum_{t=1}^n \mathbbm{1}\left\{ \hat\mu_t + \sqrt{\frac{2a}{t}} \geq \delta \right\}\\
        \kappa' &= u + \sum_{t=\lceil{u}\rceil}^n \mathbbm{1}\left\{ \hat\mu_t + \sqrt{\frac{2a}{t}} \geq \delta \right\}
    \end{align*}
    Then, $\mathbb{E}[\kappa] \leq \mathbb{E}[\kappa'] \leq 1 + 2\delta^{-2}\left( a + \sqrt{\sigma^2\pi a} + \sigma^2 \right)$ for each $n\geq1$.
\end{lemma}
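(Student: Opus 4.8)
The plan is to prove Lemma~\ref{lemma:lattimore} directly, following the standard UCB concentration argument but with enough care to make the $\sqrt{\sigma^2 \pi a}$ and $\sigma^2$ terms come out explicitly. The quantity $\kappa'$ is designed so that the first $\lceil u \rceil$ terms are accounted for deterministically (hence the leading $u = 2a\delta^{-2}$), and only the tail $t \geq \lceil u \rceil$ needs a probabilistic bound. Since $\kappa \leq \kappa'$ pointwise (the indicators in $\kappa$ are a subset, plus we add the nonnegative slack $u$), it suffices to bound $\expec[\kappa']$, and the whole content is in estimating $\sum_{t = \lceil u \rceil}^n \prob\left(\hat\mu_t + \sqrt{2a/t} \geq \delta\right)$.

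First I would observe that for $t \geq u = 2a\delta^{-2}$ we have $\sqrt{2a/t} \leq \delta$, so the event $\{\hat\mu_t + \sqrt{2a/t} \geq \delta\}$ forces $\hat\mu_t \geq \delta - \sqrt{2a/t} \geq 0$, and more usefully $\hat\mu_t \geq \delta - \sqrt{2a/t}$ where the right-hand side is a nonnegative number. Using the sub-Gaussian maximal-type tail bound $\prob(\hat\mu_t \geq \epsilon) \leq \exp(-t\epsilon^2/(2\sigma^2))$ for $\epsilon \geq 0$ (which is immediate from the sub-Gaussian MGF bound hypothesized on the $X_i$ and a Chernoff argument, exactly as in the proof of Theorem~\ref{improved Hoeffding Inequality}), I get
\begin{align*}
	\sum_{t=\lceil u \rceil}^n \prob\!\left(\hat\mu_t + \sqrt{\tfrac{2a}{t}} \geq \delta\right) \leq \sum_{t=\lceil u \rceil}^\infty \exp\!\left(-\frac{t\left(\delta - \sqrt{2a/t}\right)^2}{2\sigma^2}\right).
\end{align*}
Expanding the square, $t(\delta - \sqrt{2a/t})^2 = t\delta^2 - 2\delta\sqrt{2at} + 2a$, so each summand is $\exp(-a/\sigma^2)\exp\!\big(-(t\delta^2 - 2\delta\sqrt{2at})/(2\sigma^2)\big)$. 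I would then bound the sum by the corresponding integral $\int_0^\infty \exp\!\big(-(x\delta^2 - 2\delta\sqrt{2ax})/(2\sigma^2)\big)\,dx$ (the integrand, viewed appropriately, is eventually decreasing and the standard sum-to-integral comparison applies), and evaluate that Gaussian-type integral via the substitution $x = y^2$ (completing the square in $y$), which produces precisely the $\delta^{-2}(a + \sqrt{\sigma^2 \pi a} + \sigma^2)$ shape after collecting the error-function constant $\int_{-\infty}^\infty e^{-z^2}dz = \sqrt\pi$. Adding back the deterministic $u = 2a\delta^{-2}$ and the $+1$ from rounding $\lceil u \rceil \leq u + 1$ gives the claimed $1 + 2\delta^{-2}(a + \sqrt{\sigma^2\pi a} + \sigma^2)$.

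The main obstacle I anticipate is the clean evaluation of the integral $\int_0^\infty \exp\!\big(-(x\delta^2 - 2\delta\sqrt{2ax})/(2\sigma^2)\big)\,dx$ and, slightly more delicately, justifying the sum-to-integral comparison uniformly in the parameters — one has to check that the summand is monotone decreasing past $t = \lceil u \rceil$ (it is, since the exponent's derivative in $t$ is negative once $\sqrt{t} > \sqrt{2a}/\delta$, i.e. $t > u$) so that $\sum_{t \geq \lceil u \rceil} g(t) \leq g(\lceil u \rceil) + \int_{\lceil u\rceil}^\infty g(x)\,dx \leq \int_0^\infty g(x)\,dx$ plus the harmless constant. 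Since this is exactly Lemma~8.2 of \cite{lattimore2018bandit}, I would in fact just cite that source for the computation and only spell out the substitution $x = y^2$ and the completion of the square if a self-contained argument is wanted; the paper already flags these results as "adapted from \cite{lattimore2018bandit}" and "presented for completeness," so a short proof referring to the textbook is appropriate here.
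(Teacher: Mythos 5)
Your overall route is the same as the paper's: bound $\expec[\kappa]\leq\expec[\kappa']$, apply the sub-Gaussian Chernoff tail to each term with threshold $\delta-\sqrt{2a/t}\geq 0$ for $t\geq u$, compare the sum to an integral (your monotonicity check and the $+1$ from the first term are exactly right), and evaluate a Gaussian-type integral by completing the square. However, there is one step in your plan that, as written, does not deliver the claimed constant. You propose to drop the factor $\exp(-a/\sigma^2)$ and then bound the tail sum by $\int_0^\infty \exp\bigl(-(x\delta^2-2\delta\sqrt{2ax})/(2\sigma^2)\bigr)\,dx$, i.e.\ to extend the integral all the way down to $0$. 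Two problems arise. First, if the factor $\exp(-a/\sigma^2)$ is genuinely discarded before integrating, completing the square in the remaining integrand reintroduces a factor $\exp(+a/\sigma^2)$, which is fatal in the application where $a=c_k\log f(n)\to\infty$. Second, even if you keep that factor so it cancels, extending the lower limit from $u$ to $0$ destroys the half-Gaussian structure: after the substitution the integration variable runs over $[-\sqrt{a/\sigma^2},\infty)$ instead of $[0,\infty)$, so the cross term contributes up to $\sqrt{\pi}$ rather than $\sqrt{\pi}/2$, and the resulting bound can exceed $1+2\delta^{-2}\bigl(a+\sqrt{\sigma^2\pi a}+\sigma^2\bigr)$ by as much as an extra $2\delta^{-2}\sqrt{\pi a\sigma^2}$ (strictly so for large $a/\sigma^2$).

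The fix is precisely what the paper does: keep the integral as $\int_u^\infty \exp\bigl(-\tfrac{t}{2\sigma^2}(\delta-\sqrt{2a/t})^2\bigr)\,dt$ and substitute $x=(\delta\sqrt{t}-\sqrt{2a})/\sqrt{2\sigma^2}$, under which the lower limit $t=u$ maps exactly to $x=0$ (the vertex of the completed square), giving $\tfrac{2\sqrt{2\sigma^2}}{\delta^2}\int_0^\infty e^{-x^2}(\sqrt{2\sigma^2}x+\sqrt{2a})\,dx=\tfrac{2}{\delta^2}\bigl(\sigma^2+\sqrt{\pi a\sigma^2}\bigr)$, and hence the stated bound after adding $1+u$. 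Your fallback of simply citing Lemma~8.2 of the textbook is reasonable, but note that the textbook statement is for $1$-subgaussian variables; the paper restates and reproves it for general $\sigma$, and a pure citation would need the one-line rescaling remark (apply the $\sigma=1$ version to $X_i/\sigma$ with $\delta/\sigma$ and $a/\sigma^2$, noting $u$ is unchanged) to cover the form used here.
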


\begin{proof}
    This is a restatement of the lemma for the general case of $\sigma$-subgaussian random variables.
    
    Clearly, we have that $\expec[\kappa]\leq \expec[\kappa']$. Thus,
    \begin{align*}
        \expec[\kappa'] &= \expec\left[u + \sum_{t=\lceil u\rceil}^n \ind\left\{\hat\mu_t + \sqrt{\frac{2a}{t}}\geq \delta \right\} \right]\\
        &= u + \sum_{t = \lceil u \rceil}^n \prob\left(\hat\mu_t + \sqrt{\frac{2a}{t}} \geq \delta\right)\\
        &\leq u + \sum_{t = \lceil u \rceil}^n \exp\left(-\frac{t}{2\sigma^2}\left(\delta - \sqrt{\nicefrac{2a}{t}}\right)^2 \right)
        & (X_i\sim\sigma-\text{subgaussian})\\
        &\leq 1+u+ \int_u^\infty \exp\left(-\frac{t}{2\sigma^2}\left(\delta - \sqrt{\nicefrac{2a}{t}}\right)^2 \right) dt
    \end{align*}
    Now, using $x^2 = \frac{(\delta\sqrt{t}-\sqrt{2a})^2}{2\sigma^2}$, we have 
    \begin{align*}
        t = \frac{(\sqrt{2\sigma^2}x+\sqrt{2a})^2}{\delta^2}
        \implies dt = \frac{2\sqrt{2\sigma^2}}{\delta^2}\left(\sqrt{2\sigma^2} x + \sqrt{2a} \right)dx
    \end{align*}
    Note that the definition of $u = \frac{2a}{\delta^2}$ gives $t = u \implies x = 0$. Thus, we get 
    \begin{align*}
        \expec[\kappa'] &\leq 1 + \frac{2a}{\epsilon^2} + \int_0^\infty\frac{2\sqrt{2\sigma^2}}{ \delta^2}e^{-x^2}\left(\sqrt{2\sigma^2} x + \sqrt{2a} \right)dx\\
        &\leq 1 + \frac{2a}{\delta^2} + \frac{4\sigma^2}{\delta^2}\int_0^\infty xe^{-x^2}dx + \frac{4\sqrt{\sigma^2a}}{\delta^2} \int_0^\infty e^{-x^2}dx\\
        &= 1 + \frac{2a}{\delta^2} + \left(\frac{4\sigma^2}{\delta^2}\times\frac{1}{2}\right) +\left( \frac{4\sqrt{\sigma^2a}}{\delta^2}\times \frac{\sqrt{\pi}}{2}\right)\\
        &= 1 + \frac{2}{\delta^2}\left(a + \sqrt{\pi a \sigma^2} + \sigma^2 \right).
    \end{align*}
    Where we use that $\int_0^\infty x e^{-x^2} dx = \frac{1}{2}$ and $\int_0^\infty e^{-x^2} dx = \nicefrac{\sqrt{\pi}}{2}$.
\end{proof}

\begin{lemma}\label{lemma:best arm underexplore}
    For the best arm, $\expec\left[\sum_{t=1}^n \ind\{U_1(t-1) \leq \mu^* - \epsilon \}\right]\leq \frac{5\psi_1^2}{\epsilon^2}$ for any $\epsilon > 0$.
\end{lemma}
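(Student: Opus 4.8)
The plan is to reprove this exactly along the lines of Lemma~\ref{lemma:meta-pruning}, with the fictitious threshold $\mu^*-\epsilon$ playing the role that the upper bound $u_k$ plays there (so that the quantity ``$\mu^*-u_k$'' is replaced throughout by $\epsilon$). The first step is to note that, because the provided mean bound for the optimal arm is valid, we have $\mu^*\le u_1$, hence $\mu^*-\epsilon<u_1$; therefore the clipping at $u_1$ in the definition~\eqref{eq:Uk} of $U_1$ can never be the cause of the event $\{U_1(t-1)\le \mu^*-\epsilon\}$. Concretely, on $\{T_1(t-1)\ge 1\}$ (the event is vacuous otherwise, since then $U_1(t-1)=+\infty$),
\begin{align*}
\{U_1(t-1)\le \mu^*-\epsilon\} \subseteq \left\{\hat\mu_1(t-1)-\mu^* \le -\epsilon-\sqrt{\tfrac{2c_1\log(f(t))}{T_1(t-1)}}\right\}.
\end{align*}

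The second step is the standard union bound over the number of plays of arm~$1$. Conditioning on $T_1(t-1)=r$ for $r\in\{1,\dots,n\}$, a union bound together with the fact (Lemma~\ref{lemma:subgaussian}(2), which uses $\mu^*\ge l_{max}$) that arm~$1$'s samples are $\sqrt{c_1}$-subgaussian gives, via Theorem~\ref{improved Hoeffding Inequality},
\begin{align*}
\expec\left[\sum_{t=1}^n \ind\{U_1(t-1)\le \mu^*-\epsilon\}\right]
&\le \sum_{t=1}^n\sum_{r=1}^n \exp\!\left(-\frac{r}{2c_1}\Big(\epsilon+\sqrt{\tfrac{2c_1\log(f(t))}{r}}\Big)^{2}\right)\\
&\le \sum_{t=1}^n\frac{1}{f(t)}\sum_{r=1}^n \exp\!\left(-\frac{r\epsilon^2}{2c_1}\right),
\end{align*}
where the last line expands the square and discards the non-negative cross term, using $\exp(-\log f(t))=1/f(t)$. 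I would then finish with the two elementary summations already invoked in the proof of Lemma~\ref{lemma:meta-pruning}: $\sum_{r\ge 1}\exp(-r\epsilon^2/(2c_1))\le 2c_1/\epsilon^2$ (from $e^{-x}(1+x)\le 1$) and $\sum_{t=1}^n 1/f(t)\le 5/2$ for $f(t)=1+t\log^2 t$; multiplying them yields the claimed bound $5c_1/\epsilon^2$.

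I do not expect a genuine obstacle here---this is the textbook ``the optimal arm's index is rarely a gross under-estimate'' estimate. The two points that require care, both of which are already in place, are: (i) the harmlessness of the clipping at $u_1$, which rests squarely on the validity of $\mu^*\le u_1$; and (ii) using the (possibly much smaller) exploration scale $c_1$ rather than a crude variance proxy in the concentration inequality---this is precisely where the non-optimistic design of GLUE enters, and it is legitimate only because Lemma~\ref{lemma:subgaussian}(2) (itself a consequence of $\mu^*\ge l_{max}$ together with Theorem~\ref{improved Hoeffding Inequality}) certifies that arm~$1$ is $\sqrt{c_1}$-subgaussian.
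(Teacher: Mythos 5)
Your proof is correct and follows essentially the same route as the paper's: rewrite the event via the index definition, apply a union bound over $T_1(t-1)=r$ together with the $\sqrt{c_1}$-subgaussianity of arm~1 from Lemma~\ref{lemma:subgaussian}, and sum $\sum_r e^{-r\epsilon^2/(2c_1)}\le 2c_1/\epsilon^2$ and $\sum_t 1/f(t)\le 5/2$ exactly as in Lemma~\ref{lemma:meta-pruning}. The only difference is that you spell out the details (in particular the harmlessness of the clipping at $u_1$, which rests on $\mu^*\le u_1$) that the paper leaves implicit by citing the analogous computation.
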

\begin{proof}
    We have
    \begin{align*}
        \expec\left[\sum_{t=1}^n \ind\{U_1(t-1) \leq \mu^* - \epsilon \}\right] &= \expec\left[ \sum_{t=1}^n \ind\{U_1(t-1)\leq\mu^*-\epsilon \}\right]\\
        &= \expec\left[ \sum_{t=1}^n \ind\left\{\hat\mu_1(t) -\mu^* \leq -\epsilon - \sqrt{\frac{2\psi_1^2 \log(f(t))}{T_1(t-1)}}\right\}\right]\\
        &\leq \frac{5\psi_1^2}{\epsilon^2}
    \end{align*}
    This follows by using the union bound and Lemma \ref{lem: arm subgaussian factors} for Arm 1 (See Theorem 8.1 in \cite{lattimore2018bandit} for more details on the inequality).
\end{proof}

\begin{lemma}\label{lemma:subopt arm bound}
    If $k\in K_2(\delta) := \{k\in[K]:\mu^*\leq u_k+\delta,k\neq 1\} $ for an arbitrary $\delta>0$, then, 
    \begin{align*}
        \expec[T_k(n)]\leq 1+ \frac{5\psi_1^2}{\Delta_k^2} + \frac{2}{\Delta_k^2} \left(\psi_k^2\log(f(n)) + \sqrt{\psi_k^2\sigma_k^2\pi \log(f(n))} + \sigma_k^2 \right) + h(n).
    \end{align*}
    Where $h(n) = \Omega\left(\log(f(n))^{2/3}\right)$.
\end{lemma}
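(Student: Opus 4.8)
The plan is to follow the standard decomposition of $\expec[T_k(n)]$ for UCB-type algorithms (as in Theorem 8.1 of \cite{lattimore2018bandit}), but carefully tracking two different variance parameters: the \emph{pseudo-variance} $c_k$ that governs the exploration bonus in $U_k(t)$, and the \emph{true} subgaussian parameter $\sigma_k^2$ (from Lemma \ref{lemma:subgaussian}) that governs the concentration of $\hat\mu_k$. First I would fix an arbitrary $\epsilon \in (0, \Delta_k)$ and write
\begin{align*}
	\{A_t = k\} \subseteq \{U_1(t-1) \leq \mu^* - \epsilon\} \cup \{A_t = k,\ U_1(t-1) > \mu^* - \epsilon\}.
\end{align*}
Summing over $t$, the first event contributes at most $\tfrac{5c_1}{\epsilon^2}$ by Lemma \ref{lemma:best arm underexplore}. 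For the second event, on $\{A_t = k\}$ we have $U_k(t-1) \geq U_1(t-1) > \mu^* - \epsilon$, i.e. $\hat\mu_k(t-1) + \sqrt{2c_k \log(f(t))/T_k(t-1)} > \mu^* - \epsilon = \mu_k + \Delta_k - \epsilon$.

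Next I would bound $\sum_{t=1}^n \ind\{A_t = k,\ U_k(t-1) > \mu^* - \epsilon\}$ by $\sum_{s=1}^n \ind\{\hat\mu_{k,s} + \sqrt{2c_k \log(f(n))/s} > \mu_k + \Delta_k - \epsilon\}$, where $\hat\mu_{k,s}$ is the empirical mean of the first $s$ pulls of arm $k$ (using that $f$ is increasing and that $T_k$ increments by one on each pull of $k$). This is exactly the quantity $\kappa$ in Lemma \ref{lemma:lattimore} with $a = c_k \log(f(n))$, $\delta \leftarrow \Delta_k - \epsilon$, and subgaussian parameter $\sigma_k$ (valid by Lemma \ref{lemma:subgaussian}, part 1). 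Lemma \ref{lemma:lattimore} then gives a bound of
\begin{align*}
	1 + \frac{2}{(\Delta_k - \epsilon)^2}\left(c_k\log(f(n)) + \sqrt{c_k\sigma_k^2 \pi \log(f(n))} + \sigma_k^2\right).
\end{align*}
Adding the two pieces yields $\expec[T_k(n)] \leq \tfrac{5c_1}{\epsilon^2} + 1 + \tfrac{2}{(\Delta_k-\epsilon)^2}\big(c_k\log(f(n)) + \sqrt{c_k\sigma_k^2\pi\log(f(n))} + \sigma_k^2\big)$ for every $\epsilon \in (0,\Delta_k)$.

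Finally I would optimize the free parameter $\epsilon$. To extract the leading term $\tfrac{2c_k\log(f(n))}{\Delta_k^2}$ with the correct constant, $\epsilon$ should shrink to $0$ as $n\to\infty$ but slowly enough that $\tfrac{5c_1}{\epsilon^2}$ remains lower order than $\log(f(n))$; a choice like $\epsilon = \Theta(\log(f(n))^{-1/6})$ balances the $1/\epsilon^2 = \Theta(\log(f(n))^{1/3})$ correction against the $(\Delta_k-\epsilon)^{-2} - \Delta_k^{-2} = \Theta(\epsilon)$ relative perturbation of the main term, which is $\Theta(\epsilon \log(f(n))) = \Theta(\log(f(n))^{5/6})$ --- wait, that is larger, so one instead takes $\epsilon$ a bit smaller, e.g. $\epsilon \asymp \log(f(n))^{-1/3}$, giving $1/\epsilon^2 \asymp \log(f(n))^{2/3}$ and perturbation $\asymp \log(f(n))^{2/3}$ as well. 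Collecting all the $o(\log(f(n)))$ pieces into a single term $h(n) = \Omega(\log(f(n))^{2/3})$ gives the claimed bound. The main obstacle is precisely this bookkeeping: making sure the $\epsilon$-dependent slack terms (the $5c_1/\epsilon^2$ from the best-arm under-exploration and the Taylor expansion of $(\Delta_k-\epsilon)^{-2}$ around $\Delta_k^{-2}$) both collapse into the stated $\Theta(\log(f(n))^{2/3})$ order without contaminating the $\tfrac{2c_k\log(f(n))}{\Delta_k^2}$ leading constant, since that constant is what drives the asymptotic regret bound in Theorem \ref{UCB-SI theorem}.
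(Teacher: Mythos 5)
Your proposal is correct and follows essentially the same route as the paper: the same decomposition over $\{U_1(t-1)\leq\mu^*-\epsilon\}$ versus $\{A_t=k,\,U_k(t-1)>\mu^*-\epsilon\}$, the same use of Lemma \ref{lemma:best arm underexplore} and of Lemma \ref{lemma:lattimore} with $a=c_k\log(f(n))$, $\delta=\Delta_k-\epsilon$, $\sigma=\sigma_k$, and the same final optimization, where the paper's explicit choice $\epsilon=\Delta_k/(\alpha g(n)^{1/3}+1)$ with $g(n)=c_k\log(f(n))+\sqrt{c_k\sigma_k^2\pi\log(f(n))}+\sigma_k^2$ matches your $\epsilon\asymp\log(f(n))^{-1/3}$ scaling and yields the same $h(n)=\Theta(\log(f(n))^{2/3})$ slack without disturbing the leading constant $2c_k/\Delta_k^2$.
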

\begin{proof}
    Since $k\neq 1$, $\{A_t = k \} \subseteq \{U_1(t-1)\leq \mu^*-\epsilon \}\cup \{U_k(t-1)>\mu^*-\epsilon, A_t = k\}$ for any $\epsilon \in (0,\Delta_k)$.
    
    Thus, we have that 
    \begin{align*}
        \expec[T_k(n)] &\leq \expec\left[\sum_{t=1}^n \ind\{U_1(t-1)\leq \mu^*-\epsilon\}\right] + \expec\left[\sum_{t=1}^n \ind\{U_k(t-1) > \mu^*-\epsilon, A_t=k \}\right]\\
        &\leq \frac{5\psi_1^2}{\epsilon^2} + \expec[\sum_{t=1}^n \ind\{U_k(t-1) > \mu^*-\epsilon, A_t=k \}]~~~~~~~~~~\text{(Using Lemma \ref{lemma:best arm underexplore})}
    \end{align*}
    For the second term, following the steps in the Proof of Theorem 8.1 in \cite{lattimore2018bandit} and using Lemma \ref{lemma:lattimore} above, we get
    \begin{align*}
        &\expec[\sum_{t=1}^n \ind\{U_k(t-1) > \mu^*-\epsilon, A_t=k \}]\\
        &~~= \expec\left[\sum_{t=1}^n \ind\left\{ \hat\mu_k(t-1) + \sqrt{\frac{2\psi_k^2\log(f(t))}{T_k(t-1)}} > \mu^*-\epsilon, A_t=k \right\}\right]\\
        &~~\leq \expec\left[\sum_{r=1}^n \ind\left\{\hat\mu_{kr} - \mu_k > \Delta_k-\epsilon - \sqrt{\frac{2\psi_k^2 \log(f(n))}{r}} \right\}\right]\\
        &~~\leq 1 + \frac{2}{(\Delta_k-\epsilon)^2}\left(\psi_k^2 \log(f(n)) + \sqrt{\psi_k^2 \sigma_k^2 \pi\log(f(n))} + \sigma_k^2 \right)\\
    \end{align*}
    Here, $\hat\mu_{kt}$ is the empirical mean of $t$ i.i.d samples from arm $k$. The last inequality follows from Lemma \ref{lemma:lattimore} with $a = \psi_k^2\log(f(n))$, $\delta = (\Delta_k - \epsilon)$ and $\sigma = \sigma_k$(using Lemma \ref{lem: arm subgaussian factors}). Substituting this into the original expression, we get 
    \begin{align*}
        \expec[T_k(n)] &\leq \frac{5\psi_1^2}{\epsilon^2} + 1 + \frac{2}{(\Delta_k-\epsilon)^2}\left( \psi_k^2\log(f(n)) + \sqrt{\psi_k^2\sigma_k^2\pi\log(f(n))} + \sigma_k^2 \right)\\
        &\leq \inf_{\epsilon\in (0,\Delta_k)} \frac{5\psi_1^2}{\epsilon^2} + 1 + \frac{2}{(\Delta_k-\epsilon)^2}\left( \psi_k^2\log(f(n)) + \sqrt{\psi_k^2\sigma_k^2\pi\log(f(n))} + \sigma_k^2 \right) \numberthis \label{eqn:subopt_bound}
    \end{align*}
		
    Now, let $g(n) = \left( \psi_k^2\log(f(n)) + \sqrt{\psi_k^2\sigma_k^2\pi\log(f(n))} + \sigma_k^2 \right)$, and
    $0<\epsilon = \frac{\Delta_k}{\alpha g(n)^{1/3} + 1}<\Delta_k$ for $\alpha = \left(\nicefrac{2}{5\psi_1^2}\right)^{1/3}$. We have the following:
    \begin{align*}
        &~~\inf_{\epsilon\in (0,\Delta_k)} \frac{5\psi_1^2}{\epsilon^2} + 1 + \frac{2}{(\Delta_k-\epsilon)^2} g(n) \\
        &\leq 1 + \frac{5\psi_1^2}{\Delta_k^2} (\alpha g(n)^{1/3} + 1)^2 + \frac{2}{\Delta^2_k} \frac{(\alpha g(n)^{1/3} + 1)^2}{\alpha^2 g(n)^{2/3}} g(n)\\
        &= 1 + \frac{5\psi_1^2}{\Delta_k^2} \left( \alpha^2g(n)^{2/3} + 2\alpha g(n)^{1/3} + 1 \right) + \frac{2g(n)}{\Delta_k^2} \frac{(\alpha g(n)^{1/3} + 1)^2}{\alpha^2g(n)^{2/3}}\\
        &= 1 + \frac{5\psi_1^2}{\Delta_k^2} + \frac{2g(n)}{\Delta_k^2} + \frac{g(n)^{2/3}}{\Delta_k^2}\left( 5\psi_1^2\alpha^2 + \frac{4}{\alpha}\right) + \frac{g(n)^{1/3}}{\Delta_k^2}\left(10\psi_1^2\alpha + \frac{2}{\alpha^2} \right)\\
        &= 1 + \frac{5\psi_1^2}{\Delta_k^2} + \frac{2g(n)}{\Delta_k^2} + \underbrace{\frac{(20\psi_1^2)^{1/3}g(n)^{2/3}}{\Delta_k^2}\left(1+ (20\psi_1^2)^{1/3}\right) + \frac{3g(n)^{1/3}}{\Delta_k^2}\left(5\sqrt{2}\psi_1^2\right)^{2/3}}_{= h(n)}
    \end{align*}
    The result follows with $h(n)$ being defined as the last two terms of the expression above.
\end{proof}

We are now ready to prove the theorem.

\begin{proof}[Proof of Theorem \ref{thm: GLUE regret}]
    The theorem now follows immediately using Lemmas \ref{lemma:meta-pruning} and \ref{lemma:subopt arm bound}. This is because we can decompose regret as 
    \begin{align*}
        R_n = \sum_{t=1}^n \expec[\mu^*-Y_t]= \sum_{K_1(\delta)} \Delta_k \expec[T_k(n)] + \sum_{K_2(\delta)} \Delta_k \expec[T_k(n)],
    \end{align*}
    where $K_1(\delta) = \{k\in[K]: \mu^* > u_k + \delta, k > 1\}$ and $K_2(\delta) = \{k\in[K]: \mu^*\leq u_k + \delta, k > 1\}$, for any $\delta>0$.

    The first part of the theorem follows by bounding the two summations using Lemmas \ref{lemma:meta-pruning} and \ref{lemma:subopt arm bound} respectively. To prove the asymptotic part, we simply take $\delta\rightarrow 0$ and $n\rightarrow\infty$ in the above expression.
\end{proof}

\section{OSSB for MABs with mean bounds and Bernoulli Rewards}\label{apx: OSSB opt solution}	
Here, we derive closed form expressions for the semi-infinite optimization problem in \cite{combes2017minimal} in the case of Bernoulli rewards and bandits with mean bounds. In this section, we follow the notation in their paper. An instance $\theta$ is represented by a vector of arm means $(\theta_1,\theta_2,...\theta_K)$. We say an instance $\theta$ is feasible if for each $k\in[K]$, we have that $\theta_k \in [l_k,u_k]$. Let $\Theta$ be the set of all feasible instances. With $\Theta$ as above, $\nu(\theta_k)\sim Bernoulli(\theta_k)$ and the mapping $k,\theta \mapsto \mu(k,\theta)$ as $\mu(k,\theta) = \theta_k$, our setting of Bernoulli bandits with mean bounds can hence be viewed as a Structured Bandit.
	
Define for any $\alpha,\beta\in \Theta$, $D(\alpha,\beta,k) = d(\alpha_k,\beta_k)$ where $d(\cdot,\cdot)$ is the Bernoulli kl-divergence function. Let $\mu^*(\theta) = max_{k\in[K]} \mu(k,\theta)$. Then, the optimization problem that determines the regret lower bound in our case can be given as:
\begin{align*}
    \min_{\eta(k)\geq 0} &\sum_{k\in [K]} \eta(k)\left(\mu^*(\theta) - \mu(k,\theta) \right)\\
    s.t. &\sum_{k\in [K]} \eta(k)D(\theta,\lambda,k) \geq 1 ~~~~\forall \lambda \in \Lambda(\theta)\\
    &\Lambda(\theta) = \{\lambda\in \Theta: D(\theta, \lambda, k^*_\theta) = 0, k^*(\theta) \neq k^*(\lambda) \}
\end{align*}

We now present the solution of the above optimization for a given instance $\theta$. 
	
For simplicity, consider the case with 2 arms: Arm 1 and Arm k. Let the instance $\theta\in \Theta$ be such that $\mu(1,\theta) = \mu^*(\theta)$. We now have two cases:
\begin{itemize}
    \item[1.] $\mu^*(\theta)>u_k$: In this case, the set $\Lambda(\theta)$ is empty since there is can be no other instance $\lambda\in \Theta$ with $\mu(1,\lambda) = \mu(1,\theta) = \mu^*(\theta)$ and Arm $k$ as optimal (since $\mu(k,\theta)\leq u_k < \mu^*(\theta)$). Thus, the optimal solution to the optimization problem is $\eta^* = (0,0)$.
    \item[2.] $\mu^*_\theta\leq u_k(\theta)$: In this case, we note that to satisfy the constraint over all instances in $\Lambda(\theta)$, it it sufficient to set $\eta(k) =\frac{1}{\min_{\mu\in[\mu^*(\theta),u_k]}d(\mu(k,\theta), \mu) } = \frac{1}{d(\mu(k,\theta), \mu^*(\theta))}$. This uses the fact that $d(a,x)$ as a function of $x$ is increasing in $[a,1]$. Thus, the optimal solution in this case is $\eta^* = \left(0, \frac{1}{d(\mu(k,\theta), \mu^*(\theta))} \right)$.
\end{itemize}
Generalizing this to the case with $K$ arms, we get that the solution to this problem for a given $\theta\in \Theta$ is given by 
\begin{align*}
    \eta^*(k) = \begin{cases}
        0 & \text{if } \mu^*(\theta) >u_k \text{ or if } k=k^*(\theta)\\
        \frac{1}{d(\mu(k,\theta),\mu^*(\theta))} &\text{otherwise}
    \end{cases}
\end{align*}
	
Finally, using this as the optima of the problem above, we have that the value at this optima is given by 
\begin{align*}
    C(\theta) = \sum_{k\in K_2(\theta)} \frac{\mu^*(\theta) - \mu(k,\theta)}{d\left(\mu(k,\theta),\mu^*(\theta)\right)}=\sum_{k\in K_2(\theta)} \frac{\Delta_k(\theta)}{d\left(\mu(k,\theta),\mu^*(\theta)\right)} 
\end{align*}
Thus, the asymptotic regret of OSSB reduces to $C(\theta)$ as $\epsilon,\gamma \rightarrow 0.$
We note that in our case the OSSB algorithm uses the solution to the above optimization at the $t$-th round with $\theta$ replaced with the truncated empirical means, i.e. $\{\min(u_k,\max(l_k, \hat{\mu}_k(t)))~~\forall k \in [K]\}$.	

\section{Confounded Logs}\label{apx: confounded logs}
\subsection{Stochastic Multi-armed Bandit Setting}
We begin with the proof of Theorem \ref{thm: transfer of knowledge theorem}
\begin{proof}[Proof of Theorem \ref{thm: transfer of knowledge theorem}]
    For the upper bound, we split the expectation into two parts: $(z,u)$ where $k$ is optimal, and $(z,u)$ where $k$ is not optimal. We have
    \begin{align*}
        \mu_{k,z} &= \sum_{u \in \mathcal{U}} \mu_{k,z,u} \prob(u|z)\\
        &= \sum_{u \in \mathcal{U}: k = k^*_{z,u}}\mu^*_{z,u} \prob(u|z)+ \sum_{u \in \mathcal{U}: k \neq k^*_{z,u}} \mu_{k,z,u} \prob(u|z)\\ &\quad\quad(\text{Splitting the sum into parts based on the optimality of $k$})\\
        &\leq \sum_{u \in \mathcal{U}: k = k^*_{z,u}}\mu^*_{z,u}\prob(u|z)+ \sum_{u \in \mathcal{U}: k \neq k^*_{z,u}} \left( \mu^*_{z,u} - \underline{\delta}_z \right) \prob(u|z)\\
        &\quad\quad(\text{If $k\neq k^*_{z,u}$, then its reward is at most $\mu^*_{z,u}-\underline{\delta}_z$})\\
        &= \sum_{u\in \mathcal{U}} \mu^*_{z,u} \prob(u|z) - \underline{\delta}_Z\left(\sum_{u \in \mathcal{U}: k \neq k^*_{z,u}}\prob(u|z) \right) \\
        &= \mu_z - \underline{\delta}_Z(1-p_z(k)) \quad\quad(\text{Using the definitions of $\mu_z$ and $p_z(k)$})
    \end{align*}
    
    The inequality follows since the logs are assumed to be collected under an optimal policy. This completes the proof of the upper bound.

    To prove the lower bound, we fix an arbitratry $k\in [K_0]$ and $z\in \mathcal{Z}$. Recall that $K_{>}(k,z) = \{k': k'\neq k, \mu_z(k') > \bar{\delta}_z p_z(k')\}$ and let 
    $K_{\leq}(k,z) := [K_0]\backslash K_>(k,z) = \{k': k'\neq k, \mu_z(k') \leq \bar{\delta}_z p_z(k')\}$. We note that these sets can be identified from the logged data because $\mu_z(k')$ and $p_z(k')$ can be derived for any $k'$ and $z$. Now we define the sets $\mathcal{U}_>(k,z), \mathcal{U}_{\leq}(k,z)$ as $\mathcal{U}_{>}(k,z) =\{u \in \mathcal{U}: k^*_{z,u} \in K_{>}(k,z)\}, \mathcal{U}_{\leq}(k,z) =\{u \in \mathcal{U}: k^*_{z,u} \in K_{\leq}(k,z)\}$
    
    We now expand the mean of the arm $k$ under partial context $z$ as follows. 
    \begin{align*}
        \mu_{k, z} &= \sum_{u\in \mathcal{U}: k = k^*_{z,u}} \mu^*_{z,u} \prob(u|z) 
        + \sum_{u\in \mathcal{U}: k \neq k^*_{z,u}} \mu_{k,z,u} \prob(u|z) \\
        &= \mu_{z}(k) 
        + \sum_{u\in \mathcal{U}_{\leq}(k,z)} \mu_{k,z,u} \prob(u|z) 
        + \sum_{u\in \mathcal{U}_{>}(k,z)} \mu_{k,z,u} \prob(u|z) \\
        &\geq \mu_{z}(k) 
        + \sum_{u\in \mathcal{U}_{>}(k,z)} (\mu^*_{z,u} - \bar{\delta}_z) \prob(u|z)\\
        &= \mu_{z}(k) 
        + \sum_{k'\in K_{>}(k,z)} \mu_{z}(k') - \bar{\delta}_z \sum_{k'\in K_{>}(k,z)} p_z(k').
    \end{align*}
    The inequality holds because, firstly, we have $\mu_{k,z,u} \geq 0$ which we use for $u\in \mathcal{U}_{\leq}(k,z)$. Secondly, we have $\mu_{k,z,u} \geq (\mu^*_{z,u} - \bar{\delta}_z)$ by definition of $\bar{\delta}_z$, which we use for $u\in \mathcal{U}_{>}(k,z)$. Since this holds for any arbitrary $k,z$, this proves the lower bound.
\end{proof}

Now we move on to the proof of our tightness claim

\begin{proof}[Proof of \ref{prop: tightness of transfer}]
    \textbf{Upper Bound Tightness:} 
		
    For the tightness of the upper bound, notice that 
    \begin{align*}
        u_{k,z} &= \sum_{u\in \mathcal{U}} \mu^*_{z,u}\prob(u|z) - \underline{\delta}_z \sum_{u\in \mathcal{U}:k\neq k^*_{z,u}} \prob(u|z)\\
        &= \sum_{u\in \mathcal{U}:k=k^*_{z,u}} \mu_{k,z,u} \prob(u|z) + \sum_{u\in \mathcal{U}:k\neq k^*_{z,u}} (\mu^*_{z,u}-\underline\delta_z)\prob(u|z).
    \end{align*}
    Which is the same as the quantity $\mu_{k,z} = \sum_{u\in\mathcal{U}} \mu_{k,z,u} \prob(u|z)$ when
    \begin{equation*}
        \mu_{k,z,u} = \mu^*_{z,u} - \underline\delta_z ~~\forall u\in \mathcal{U} : k\neq k^*_{z,u}
    \end{equation*}
    If the above equality holds for all $k\in [K_0]$ and all $z\in \mathcal{Z}$, we have that $\mu_{k,z} = u_{k,z}$. This instance is admissible since it maintains the means of the best arms are unchanged, and thus do not affect the quantities $\mu_z$ and $p_z(k)$ for any $k$. Thus, there exists an admissible instance where the upper bounds for each arm is tight.
    
    \textbf{Lower Bound Tightness:}
    
    Fix an arm $k$, and a partial context $z$. For the tightness of the lower bound for this particular arm $k$ and partial context $z$, we present an instance now. We assign $\mu_{k,z,u} = 0$ for all $u \in \mathcal{U}_{\leq}(k,z)$, and $\mu_{k,z,u} = \mu^*_{z,u} -\bar{\delta}_z$ for all $u \in \mathcal{U}_{\leq}(k,z)$. For this particular choice, it is easy to see that the above lower bound is tight. We now need to prove that this instance is admissible. 
    
    1. By construction we know $\mu_{k,z,u} \geq 0$ as $\mu^*_{z,u}\geq \bar{\delta}_z$ for all partial contexts $u\in \mathcal{U}_{>}(k,z)$. That $\mu_{k,z,u} \leq 1$, is easy to check.
    
    2. We know that $(\mu^*_{z,u} - \mu_{k,z,u}) \leq \bar{\delta}_z$ by construction for all partial contexts $u\in \mathcal{U}_{>}(k,z)$, and due to the fact that $\mu^*_{z,u} \leq \bar{\delta}_z$ and $\mu_{k,z,u} = 0$ for all partial contexts $u\in \mathcal{U}_{\leq}(k,z)$. 
    
    3. We know that $(\mu^*_{z,u} - \mu_{k,z,u}) \geq \underline{\delta}_z$ by construction for all partial contexts $u\in \mathcal{U}_{>}(k,z)$, and due to the fact that $\mu^*_{z,u} \geq \underline{\delta}_z$ and $\mu_{k,z,u} = 0$ for all partial contexts $u\in \mathcal{U}_{\leq}(k,z)$. Here, $\mu^*_{z,u} \geq \underline{\delta}_z$ due to non-negativity of the mean-rewards and by definition of $\underline{\delta}_z$ (the smallest gaps).
    
    4. Finally, $k$ is never optimal for any partial contexts $u\in \mathcal{U}_{\leq}(k,z) \cup \mathcal{U}_{>}(k,z)$. Indeed, for all the partial contexts $u\in \mathcal{U}_{\leq}(k,z)$, we have $\mu_{k,z,u}=0$ and $\mu^*_{z,u} \geq \underline{\delta}_z > 0$. For all the partial contexts $u\in \mathcal{U}_{>}(k,z)$ we have $\mu_{k,z,u}=(\mu^*_{z,u} - \bar{\delta}_z) < \mu^*_{z,u}$. Therefore, for this instance we never observe $k$ for any partial contexts $u\in \mathcal{U}_{\leq}(k,z) \cup \mathcal{U}_{>}(k,z)$, which ensures the log statistics does not change.
    
    This concludes that the instance is admissible, and it proves that the lower bound is tight.
\end{proof}

\subsection{Confounded Logs for Linear Bandits with Fixed Arms}
We now prove our result for mean bounds from confounded logs in the linear bandit case.

\begin{proof}[Proof of Theorem \ref{thm: transfer of knowledge linear case}]
Let $\mathcal{E}_k = \left\{ k = \hat{k}(T_u)\right\}$ be the event that arm $k$ was optimal under the full context $\theta^*$ when $T_u$ was drawn according to the distribution $\mathcal{C}$. By definition, we have $p_k = \prob(\mathcal{E}_k)$ and 
 \begin{align*}
     \expec[\innerprod{a_k}{\theta^*}] &= \innerprod{a_{k,z}}{\theta^*_z} + \expec[\innerprod{a_{k,u}}{T_u}]\\
     &= \innerprod{a_{k,z}}{\theta^*_z} + p_k \expec\left[\innerprod{a_{k,u}}{T_u}| \mathcal{E}_k \right]+ (1-p_k)\expec\left[\innerprod{a_{k,u}}{T_u}| \mathcal{E}_k^C \right]\\
     &= p_k\nu_k + (1-p_k)\left(\innerprod{a_{k,z}}{\theta^*_z} + \expec\left[\innerprod{a_{k,u}}{T_u}| \mathcal{E}_k^C \right]\right)
 \end{align*}

 Since $T_u$ is such that the values of each of its entries are in $[m,M]$, we can write $dm\innerprod{\ind_d}{a_{k,u}}\leq \expec\left[\innerprod{a_{k,u}}{T_u}| \mathcal{E}_k^C \right] \leq dM\innerprod{\ind_d}{a_{k,u}}$. Rearranging the above equation then gives us the required result.
\end{proof}

\subsection{More Empirical Results}

In Figure \ref{fig: instances_occ}, we present the instances extracted out of each of the visible contexts of Occupation. The bounds are calculated as suggested by Theorem \ref{thm: transfer of knowledge theorem}.

\begin{figure*}[ht]
    \centering
    \begin{subfigure}[b]{0.25\textwidth}
        \centering
        \includegraphics[width = 1.4in]{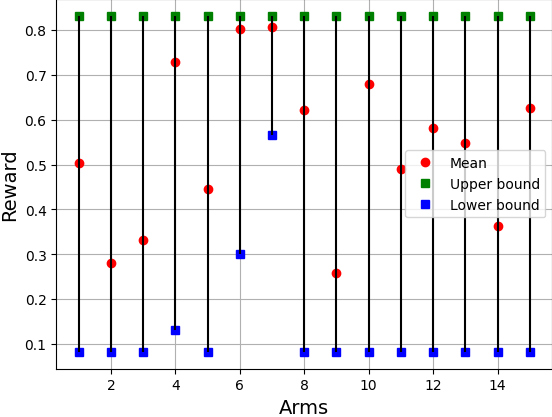}
        \caption{Academic}
    \end{subfigure}%
    \begin{subfigure}[b]{0.25\textwidth}
        \centering
        \includegraphics[width = 1.4in]{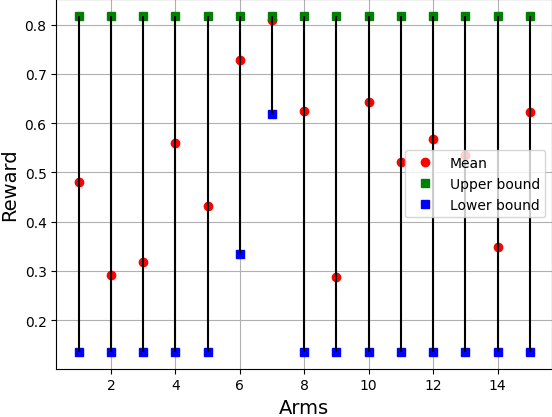}
        \caption{Art}
    \end{subfigure}%
    \begin{subfigure}[b]{0.25\textwidth}
        \centering
        \includegraphics[width = 1.4in]{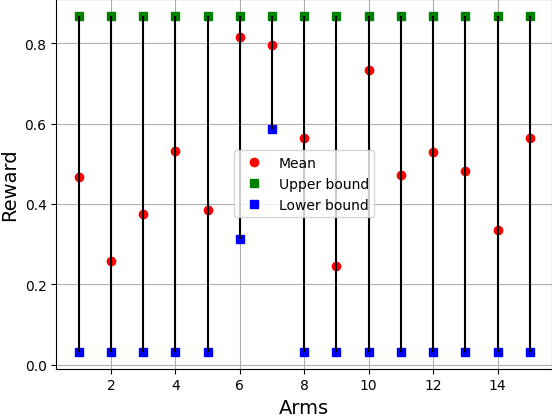}
        \caption{Law}
    \end{subfigure}%
    \begin{subfigure}[b]{0.25\textwidth}
        \centering
        \includegraphics[width = 1.4in]{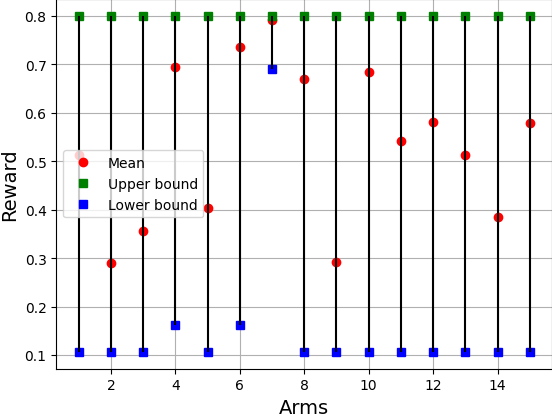}
        \caption{Office}
    \end{subfigure}%
    
    \begin{subfigure}[b]{0.25\textwidth}
        \centering
        \includegraphics[width = 1.4in]{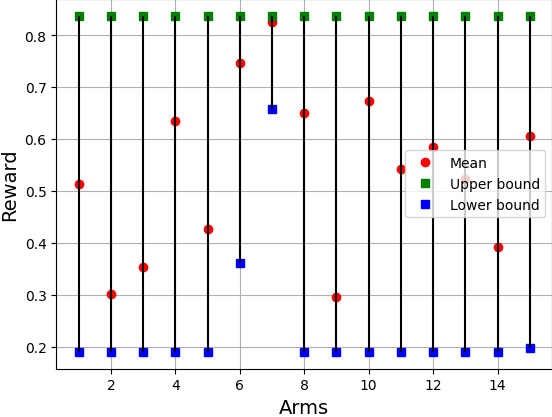}
        \caption{Others}
    \end{subfigure}%
    \begin{subfigure}[b]{0.25\textwidth}
        \centering
        \includegraphics[width = 1.4in]{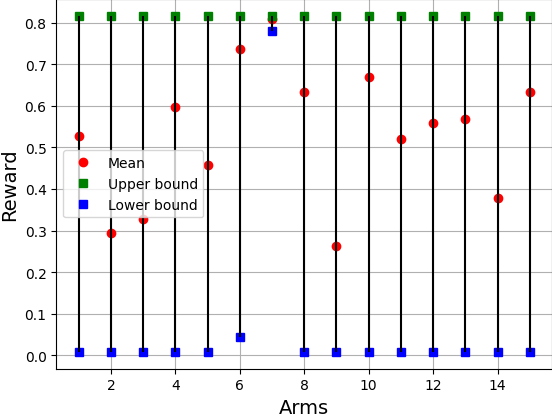}
        \caption{Retired}
    \end{subfigure}%
    \begin{subfigure}[b]{0.25\textwidth}
        \centering
        \includegraphics[width = 1.4in]{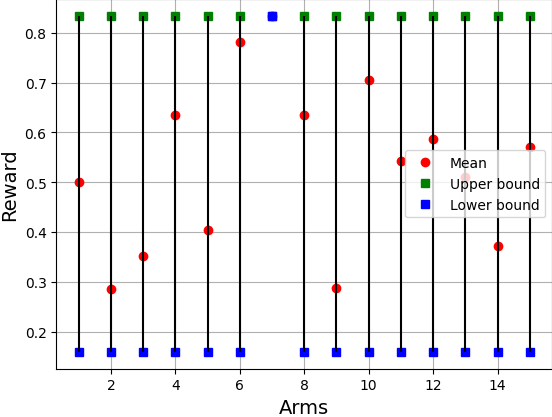}
        \caption{Scientific}
    \end{subfigure}%
    \begin{subfigure}[b]{0.25\textwidth}
        \centering
        \includegraphics[width = 1.4in]{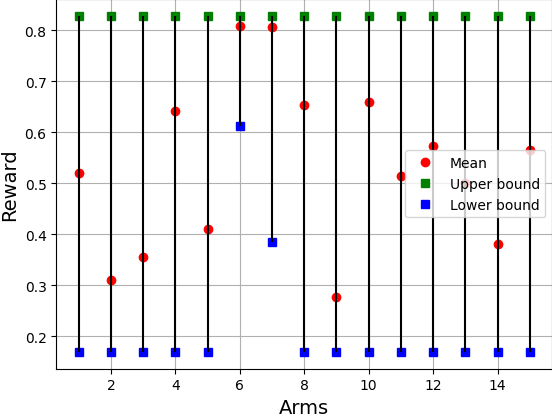}
        \caption{Student}
    \end{subfigure}%
    \caption{\textbf{Instances with Occupation as visible context: } The occupations are listed in the caption. The data filtration and movie selection process is explained in Section \ref{sec: empirical transfer}.}
    \vspace{-5pt}
    \label{fig: instances_occ}
\end{figure*}

\end{document}